\newtheorem{theorem}{Theorem}[section]
\newtheorem{definition}[theorem]{Definition}
\newtheorem{lemma}[theorem]{Lemma}
\newtheorem{corollary}[theorem]{Corollary}
\newcommand*{\R}{{\mathbb{R}}}
\newcommand*{\Y}{\mathcal{Y}}
\newcommand*{\X}{\mathcal{X}}
\newcommand*{\C}{\mathcal{C}}
\newcommand*{\D}{{\mathcal{D}}}
\renewcommand*{\P}{\mathcal{P}}
\newcommand*{\T}{{\mathcal{T}}}
\newcommand{\dd}{\mathrm d}
\newcommand{\CAL}{\mathrm {CAL}}
\newcommand{\ind}{\mathbb I}
\newcommand*{\Dp}{{\mathcal{D}^p}}
\newcommand*{\yp}{{\mathbf{y}^p}}
\newcommand*{\x}{\mathbf{x}}
\newcommand*{\y}{\mathbf{y}}
\newcommand{\ECE}{\mathrm{ECE}}
\newcommand{\smCE}{\mathrm{smCE}}
\newcommand{\degCE}{\mathrm{degCE}}
\newcommand{\dCE}{\mathrm{dCE}}
\newcommand{\uCE}{\overline{\dCE}}
\newcommand{\lCE}{\underline{\dCE}}
\newcommand{\intCE}{\mathrm{intCE}}
\newcommand{\cfdl}{\mathrm{CFDL}}
\newcommand{\cdl}{\mathrm{CDL}}
\newcommand{\CE}{\mathrm{CE}}
\newcommand{\Cal}{\mathrm{Cal}}
\let\eps\epsilon
\let\phi\varphi
\DeclareMathOperator*{\E}{\mathbb{E}}
\DeclareMathOperator*{\ex}{\mathbb{E}}
\renewcommand{\Im}{\mathsf{Image}}
\DeclareMathOperator*{\argmax}{arg\,max}
\DeclareMathOperator*{\argmin}{arg\,min}
\DeclareMathOperator*{\minimize}{minimize}
\newcommand{\Ind}{\mathbf{1}}
\let\hat\widehat
\DeclareMathOperator{\sgn}{sign}
\newcommand{\ignore}[1]{}
\newcommand{\eat}[1]{}
\newcommand{\fr}[1]{\frac{1}{#1}}
\newcommand{\width}{\mathrm{width}}
\renewcommand{\v}{\mathbf{v}}
\renewcommand{\x}{\mathbf{x}}
\renewcommand{\y}{\mathbf{y}}
\newcommand{\zo}{\{0,1\}}
\newcommand{\p}{\mathbf{p}}
\DeclareMathOperator{\TV}{TV}
\DeclareMathOperator{\EMD}{EMD}
\newcommand{\mD}{\mathcal{D}}
\newcommand{\vstar}{{v^*}}
\newcommand{\recal}{\hat{p}}
\numberwithin{equation}{section}
\title{Calibration through the Lens of Indistinguishability\thanks{This is the full version of a survey that appears in the ACM SIGecom Exchanges \cite{GH-survey} .}}
\author{Parikshit Gopalan \\ Apple \and Lunjia Hu\thanks{Work done in part while Lunjia Hu was a Postdoctoral Fellow at Harvard University, supported by the
Simons Foundation Collaboration on the Theory of Algorithmic Fairness and the Harvard Center for Research on
Computation and Society (CRCS).} \\ Northeastern University}
\date{}
\begin{document}

\maketitle

\begin{abstract} 
Calibration is a classical notion from the forecasting literature which aims to address the question: how should predicted probabilities be interpreted? In a world where we only get to observe (discrete) outcomes, how should we evaluate a predictor that hypothesizes (continuous) probabilities over possible outcomes? The study of calibration has seen a surge of recent interest, given the ubiquity of probabilistic predictions in machine learning. This survey describes recent work on the foundational questions of how to define and measure calibration error, and what these measures mean for downstream decision makers who wish to use the predictions to make decisions. A unifying viewpoint that emerges is that of calibration as a form of indistinguishability, between the world hypothesized by the predictor and the real world (governed by nature or the Bayes optimal predictor). In this view, various calibration measures quantify the extent to which the two worlds can be told apart by certain classes of distinguishers or statistical measures.
\end{abstract}

\section{Introduction}
\label{sec:intro}

Prediction is arguably the ubiquitous computational task of our time. Every day, a remarkable amount of computational resources are invested in the prediction of various probabilities, whether it is a language model trying to answer a user's ambiguous query or a recommendation engine trying to predict which product/profile to show a user. These automated predictions affect nearly every aspect of our lives, be it social, medical or financial. What makes prediction different from more classical computational tasks (such as sorting numbers or computing max-flows) is that there is no well-defined notion of what constitutes correctness. 

To explore this issue in greater detail, let us consider the simplified setting of binary prediction, where nature is modeled as a joint distribution $\D^*$ over attributes $\x$ drawn from a domain $\X$ and labels $\y \in \Y$. In this article, we will mainly focus on the setting $\Y = \zo$ of Boolean labels.\footnote{We use boldface for random variables, thus $\x$ is random variable drawn from $\X$ whereas $x \in \X$ is a point in the domain.} We denote the marginal distribution over $\X$ by $\D^*_\X$, and $\Y$ by $\D^*_\Y$. A predictor is a function $p:\X \to [0,1]$. The {\em ground truth} in this setting is represented by the Bayes optimal predictor $p^*(x) = \ex[\y^*|\x = x]$. The obvious formulation  of correctness in prediction might be to learn $p^*$. The challenge is that we never see the values of $p^*$ itself, our only access to it is via the labels $\y^*$ which satisfy $\ex[\y^*|x] = p^*(x)$. So the obvious formulation of correctness, as finding $p$ which is close to $p^*$ under some suitable measure of distance, will not work. There are (at least) two different and complementary approaches: loss minimization and calibration.

\subsection{Loss minimization} 
In loss minimization, we choose a loss function $\ell: \zo \times [0,1] \to \R$, and a hypothesis class of predictors $\P = \{p:\X \to [0,1]\}$, and aim to find the predictor that minimizes \[ p = \argmin_{p' \in \C} \ex_{(\x, \y^*) \sim \D^*} [\ell(\y^*, p'(\x))]. \]
In essence, we use the labels $\y^*$ as a proxy for $p^*$, while $\ell$ plays the role of a distance measure. But it turns out that (for any proper loss) we indeed find the predictor in our family $\P$ that is closest to $p^*$. This is a consequence of the {\em bias variance} decomposition. Taking the example of squared loss $\ell(y,v) = (y - v)^2$, the decomposition tells us that for any predictor $p$, \footnote{It is easy to prove a similar statement about any {\em proper} loss, and a little harder to prove it about arbitrary losses. But the takeaway remains the same: by minimizing loss over a family $\P$, we find the best approximation to $p^*$ from $\P$ under a suitable notion of distance tailored to the loss. }

\begin{align*}
    \ex_{\D^*}[(\y^* - p(\x))^2] =  \underbrace{\ex_{\D^*}[(p(\x) - p^*(\x))^2]}_{\text{bias}} + \underbrace{\ex_{\D^*}[(\y^* - p^*(\x))^2]}_{\text{variance}}.
\end{align*}
Note that the variance is a property of $p^*$, independent of $p$.

Loss minimization is a simple yet immensely powerful paradigm that powers much of contemporary machine learning. But is it a satisfactory notion of correctness for prediction tasks? 
Here are some questions to consider:

\begin{itemize}
\item Imagine that a decision maker is using a predictor to make decisions that minimize their own loss function. This loss may differ from the one used to train the model, and might  differ across various decision makers. For instance, we could use forecasts about rain to decide whether or not to carry an umbrella, to decide whether to have a party outdoors or indoors, or whether to turn off the sprinklers. Each of these has its own loss function. Say our loss for carrying an umbrella when it does not rain is $0.1$, and for not carrying an umbrella when it rains is $0.9$. The optimal strategy here is to carry an umbrella on days when $p^*(x) \geq 0.1$. Now suppose that the predictor $p$ we have access to is not Bayes optimal. How do we make decisions using this predictor? Should we carry an umbrella whenever $p(x) \geq 0.1$, just like with the Bayes optimal predictor, or should we make decisions differently? 

\item We know that the squared loss decomposes into bias and variance, but we have no way of knowing how large each of these are. If we suffer large squared loss, it could because nature is inherently random (e.g $p^*$ is often close to $1/2$), or because nature is deterministic but sufficiently complex that it {\em looks random} to our hypothesis class $\P$. Loss minimization does not distinguish between these scenarios.  

\item Suppose we wish to predict the probability of rain tomorrow, and the model $p$ found by minimizing squared loss gives a $60 \%$ chance of rain. How should we interpret this prediction? Is it possible that although $p$ minimizes expected error globally, it is not particularly good at prediction for certain types of days (like days in September)? 
Concerns like these arise naturally in the context of fair predictions for subgroups (see the discussion on multicalibration in Section \ref{sec:limits}). 

\end{itemize}
The question of what a prediction really guarantees naturally leads us to calibration.

\subsection{Calibration} 
 Calibration is a notion of correctness that focuses on ensuring that predicted probabilities align with actual outcomes. Intuitively, on days when a calibrated predictor predicts a $60\%$ chance of rain, it rains $60\%$ of the time. Formally, we can define perfect calibration as follows:

\begin{definition}
\label{def:perfect-cal}
    The predictor $p: \X \to [0,1]$ is \emph{perfectly calibrated} under the distribution $\D^*$ if for every $v \in \Im(p)$, it holds that $\ex[\y^*|p(\x) = v]  = v$.
\end{definition}

A key property of calibration is that it simplifies downstream decision making. For instance, let us return to the problem of using the forecast about rain to decide whether or not to carry an umbrella, where our loss for carrying an umbrella when it does not rain is $0.1$, and for not carrying an umbrella when it rains is $0.9$. Now suppose that the predictor $p$ we have access to is not Bayes optimal, but it is calibrated. If we are basing decisions solely on $p$, then the optimal strategy is still to carry an umbrella on days when $p(x) \geq 0.1$. The expected loss we would suffer is exactly what we would have suffered if our predictor were Bayes optimal.

This naturally motivates  an alternate view of calibration as a notion of correctness for predictors based on indistinguishability from the Bayes optimal, which will be an important theme in this survey. This view is inspired by the outcome indistinguishability framework of \cite{OI}.\footnote{That work does not consider calibration {\em per se}, it instead considers more general notions such as multicalibration from \cite{hkrr2018}. In the context of calibration, it is plausible that this indistinguishability viewpoint predates it, though we have not found a reference. }

To every predictor $p: \X \to [0,1]$, we can associate a distribution $\Dp$ on pairs $(\x, \y^p)$ where the marginal on $\x$ is $\D^*_\X$ and where $\ex[\yp|\x] = p(\x)$. The Bayes optimal predictor for $\Dp$ is $p$. Perfect Calibration requires that the joint distributions $(p(\x), \yp)$ and $(p(\x), \y^*)$ be identical.

\begin{lemma}[Perfect calibration as indistinguishability]
\label{lem:perfect-cal}
    The predictor $p: \X \to [0,1]$ is perfectly calibrated under the distribution $\D^*$ iff the joint distributions $J^* = ((p(\x),\y^*))$ and $J^p = (p(\x), \y^p)$ on $[0,1] \times \zo$  are identical.
\end{lemma}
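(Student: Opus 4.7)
The plan is to reduce the equality of the two joint distributions on $[0,1] \times \{0,1\}$ to an equality of their conditional means given $p(\x)$, since the label is Bernoulli-valued, and then observe that $J^p$ automatically has conditional mean equal to $v$ given $p(\x)=v$, so the equality reduces to the calibration condition for $J^*$.

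First, I would note that both $J^*$ and $J^p$ have the same marginal on the first coordinate: in both cases $\x \sim \D^*_\X$, so $p(\x)$ has the same distribution on $[0,1]$. Hence the two joint distributions agree iff the conditional distributions of the label given $p(\x)=v$ agree for (almost) every $v$ in the support, equivalently for every $v \in \Im(p)$. Since $\y^*, \y^p \in \{0,1\}$, each such conditional is a Bernoulli random variable, determined entirely by its mean. So it suffices to show that $\E[\y^* \mid p(\x)=v] = \E[\y^p \mid p(\x)=v]$ for every $v \in \Im(p)$ iff $p$ is perfectly calibrated.

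Next I would compute the right-hand side. By the tower rule, and using the defining property $\E[\y^p \mid \x] = p(\x)$ of the distribution $\D^p$,
\begin{align*}
\E[\y^p \mid p(\x)=v] = \E\bigl[\E[\y^p \mid \x] \,\bigm|\, p(\x)=v\bigr] = \E[p(\x) \mid p(\x)=v] = v.
\end{align*}
Thus the equality $\E[\y^* \mid p(\x)=v] = \E[\y^p \mid p(\x)=v]$ becomes exactly $\E[\y^* \mid p(\x)=v] = v$, which is the definition of perfect calibration from \Cref{def:perfect-cal}. This gives both directions at once.

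There is no real obstacle here; the only mildly delicate point is handling continuous values of $p$, where one should interpret ``$p(\x)=v$'' via the regular conditional distribution and ``for every $v \in \Im(p)$'' as ``for $\D^*_\X$-almost every $v$ in the pushforward support.'' With that convention, the two steps above are a direct application of the tower property and the definition of $\D^p$.
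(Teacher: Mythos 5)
Your proof is correct and follows essentially the same route as the paper: both arguments note that the marginals of $p(\x)$ agree, reduce equality of the joint distributions to equality of the Bernoulli conditionals $\y^* \mid p(\x)$ and $\y^p \mid p(\x)$, and then use that $\y^p \mid p(\x)=v$ is $\mathrm{Ber}(v)$ to recover the calibration condition. The only difference is that you spell out the tower-rule computation of $\E[\y^p \mid p(\x)=v]=v$, which the paper takes as immediate.
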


Let us see why this is true. Since the marginal distribution of $\x$ is the same in both cases, the distribution of $p(\x)$ is also the same. In essence, we require that the distributions $\y^*|p(\x)$ and $\yp|p(\x)$ be identical. Since the latter is the Bernoulli distribution with parameter $p(\x)$, we require the same for $\y^*|p(\x)$, which is the standard definition. This guarantee conditional on each prediction is the key strength of calibration as a prediction guarantee.\footnote{Of course, there might be a different calibrated predictor that only puts the chances of rain at $30 \%$. There is no contradiction because the level sets of the predictors over which we average are different in the two cases. }

The indistinguishability property asserts that $p(\x)$ be a plausible explanation for the observations $\y^*$ given $\x$, in that the conditional distribution of $\y^*|p(\x)$ is consistent with the hypothesis that $p(\x)$ is the Bayes optimal predictor.
This indistinguishability property is desirable in machine learning, where we often try to model complicated processes (like the likelihood of a medical condition) and are unlikely to find the true Bayes optimal.
Calibrated predictors are considered more trustworthy, whereas a predictor that is not calibrated will fail some basic tests: the probability of the label being $1$ conditioned on $p(\x) = v$ is not $v$. 

In the bigger picture, the notion of indistinguishability has played a central role in several disciplines within theoretical computer science, cryptography  and pseudorandomness to name just a couple, indeed its roots go back to the Turing test. Viewing calibration as a form of indistinguishability lets us draw on ideas from those areas when we seek to define approximate calibration or generalize our notions beyond the binary classification setting.

\subsection{From perfect to approximate calibration}
\label{sec:p-to-a}

Perfect calibration is a clean abstraction, but predictors trained and used for prediction tasks in the real world are seldom perfectly calibrated. For calibration to be a useful notion, we need to define what it means for a predictor to be approximately (but not perfectly) calibrated, and we need efficient methods to measure calibration error. How to do this in a principled manner is the main focus of this article.

There are many desiderata that one might hope a notion of approximate calibration satisfies:

\begin{enumerate}
    \item It should preserve the desirable properties of calibration, such as indistinguishability and simple downstream decision making, in some approximate sense.
    \item It should be efficient to measure the calibration error of a given predictor, just from black box access to samples of the form $(p(\x), \y^*)$, both in terms of sample complexity and computational complexity. In an online setting (to be defined shortly), we might wish for our notion to have low-regret algorithms. 
    \item The notion should be robust to small perturbations in the predictor. A tiny change to a calibrated predictor should not result in a predictor with huge calibration error. For instance, changing the days forecast from $60\%$ to $59.999\%$ should not result in wild swings in the calibration error.\footnote{This is especially desirable from a machine learning perspective, where the lower order bits of prediction are considered insignificant and typically disregarded in low-precision arithmetic.} 
    \item The notion should extend beyond binary classification, to multiclass labeling and regression, while maintaining properties like efficiency. 
\end{enumerate}

Achieving all of these properties is not easy. The classical notion of calibration error, which is the expected calibration error or $\ECE$, only satisfies property (1) above; we will discuss this in more detail in Section \ref{sec:cal-error}. An active line of recent research has yielded a rich theory of approximate notions of calibration, together with algorithms for computing them efficiently in various models. Yet, to date, there is no single notion that satisfies all four desiderata mentioned above! 

Perhaps this is too much to hope for, since some of these desiderata (eg. robustness and low-regret algorithms) arise from different motivating scenarios. But a clear takeaway from this body of research is  that approximate calibration is surprisingly challenging to define and measure.  The key technicality in defining approximate calibration error comes from conditioning. Every definition of calibration involves some form of conditioning on predictions. While this conditioning is simple for perfectly calibrated predictors, it is far trickier for predictors that are not perfectly calibrated, since predictions are real-valued. 

In this survey, we will highlight how the {\em indistinguishabilty} viewpoint on calibration guides us in formulating what approximate calibration should mean. At a high level, there are two approaches to this task: 
\begin{itemize}
\item {\bf Limit the set of  distinguishers :} Rather than require $J^*$ and $J^p$ be identical, we ask that they look similar to a family $W$ of distinguishers. The calibration error is measured by the maximum distinguishing advantage achieved over all distinguishers in $W$. This approach is directly inspired by cryptography and pseudorandomness.
\item {\bf Use a divergence/distance on distributions:} Since $J^*$ and $J^p$ are both distributions on the domain $[0,1] \times \zo$, we can use distance measures/divergences on probability distributions (e.g., total variation, earthmover) to measure the distance between them, and use this as our measure of the calibration error. As we will see, this view relates to a quantification of the economic value of calibration
from the perspective of downstream decision making.
\end{itemize}

These approaches lead to a number of calibration error measures that we will explore in more detail in this article, and which have many advantages over $\ECE$ and other traditional calibration measures. We will analyze smooth calibration error \cite{kakadeF08}, which satisfies properties (1-3) but not (4). It also corresponds to an intuitive notion of approximate calibration, where the predictor is close to some perfectly calibrated predictor in earthmover distance.

From the computational standpoint, the natural model in which to study calibration has been the online setting, where we measure the regret or calibration error of our prediction strategy over $T$ time steps.\footnote{Note that the computational task of learning a calibrated predictor admits trivial solutions in the offline model; for instance, one can always predict the expectation of the label.}
The classic work of \cite{FosterV98} showed that sublinear calibration error, as measured by $\ECE$ is possible. The regret rate achieved in their work is $O(T^{2/3})$.\footnote{The regret rate is $T$ times the calibration error on the uniform distribution over the $T$ time steps.}
It is known that regret rates of $O(\sqrt{T})$ or even $\widetilde O(\sqrt{T})$ are not possible for $\ECE$ \cite{sidestep}, and figuring out the optimal regret achievable is an active area of research (see, e.g., \cite{breaking}). However, 
new notions of calibration, which we will discuss in this survey, actually admit prediction strategies that achieve $O(\sqrt{T})$ or $\widetilde O(\sqrt{T})$ regret rates \cite{dist-online,elementary,cdl}.

\subsection{Limitations and generalizations} 
\label{sec:limits}

Calibration is clearly a  desirable property for a predictor, but it has limitations, and cannot be considered as a standalone notion of goodness for a predictor. We ideally want predictors to have both good calibration and other properties like small expected loss. We discuss these limitations below, and use this as motivation to introduce the stronger notion of multicalibration \cite{hkrr2018}, and discuss how it addresses these limitations.

\paragraph*{Calibration does not guarantee utility} There are many predictors that will satisfy calibration, and we would not consider all of them to be equally informative or good. For instance, the {\em average} predictor $\bar{p}$ that always predicts the average label $\ex_{\D^*}[\y^*]$ is perfectly calibrated, as is the Bayes optimal $p^*$. Any reasonable loss function would distinguish between these predictors, but calibration (by itself) does not.

\paragraph*{Calibration gives guarantees on average over the entire population} In some applications, this might not be good enough. For instance, suppose we train a predictor to predict the risk of a certain risk of disease for a patient. On examining the data, we find that although the predictor is calibrated over the general population, it is miscalibrated for patients with a certain medical history, who are a small fraction of the dataset (so this does not affect the overall calibration error too much). We would not trust such a predictor to make decisions for those patients. 

\paragraph*{Multicalibration}
Multicalibration, introduced in \cite{hkrr2018}, is a strengthening of calibration. It requires that our predictions are calibrated, even when conditioned on membership in a rich collection of demographic subgroups $\C \subseteq 2^\X$. Which subgroups to consider is an important consideration, which is dictated by the data and computational resources available to the predictor. We refer the reader to \cite{hkrr2018} for more details. 

Although calibration by itself does not guarantee good loss minimization,  multicalibration with respect to rich class of subgroups $\C$ does imply strong loss minimization. This was the key insight in the work of \cite{omni} which introduced the notion of omniprediction. Omniprediction asks for a predictor that is as good as benchmark class $\C$ not just for a single loss function, but for any loss from a large family of loss functions. \cite{omni} shows a surprising connection between omniprediction with respect to a benchmark class $\C$ and multicalibration with respect to $\C$. 
 
From the indistinguishability perspective, \cite{OI} showed that multicalibration is equivalent to indistinguishability of the distributions $(c(\x), p(\x), \y^*)$ and $(c(\x),\allowbreak p(\x), \y^p)$ for all $c: \X \to \zo$ that lie in some family $\C$ of functions. Beyond its original motivation in multigroup fairness, multicalibration has proved to be tremendously powerful, finding applications to omniprediction \cite{omni},  domain adaptation \cite{kim2022universal}, pseudorandomness \cite{DworkLLT}, and computational complexity \cite{CDV}.

\paragraph*{Organization of this survey}

In Section \ref{sec:ece}, we consider expected calibration error (ECE) and explore its weaknesses. In Section \ref{sec:cal-error}, we introduce weighted  calibration measures which capture the notion of indistinguishability to limited classes of distinguishers. This unifies several different notions of approximate calibration in the literature. In Section~\ref{sec:decision}, we describe Calibration decision loss, which looks at calibration from an economics perspective, through the eyes of a downstream decision maker who wants to use the predictions of a predictor to optimize their utility. We review the active area of research on online calibration in \Cref{sec:online}.  Given the number of calibration notions that we will encounter,  a natural question is whether there is some ground truth notion against which we can compare these different notions.  In Section \ref{sec:dtc}, we define the distance to calibration, which proposes a ground-truth notion of what approximate calibration ought to mean, and show how smooth calibration shows up naturally in this setting.

\eat{
\ilcomment{I think the introduction would be significantly improved if you spent a bit of time rewriting this sentence and the bullet points following it. If you had to summarize the main point you want the reader to remember after this section, what would it be? How could you set up the bullet points so that that's what the reader takes away? I'd suggest that you spend some time thinking about this, craft a clear and concise thesis statement that sets up the rest of the section, and then use the bullet points to provide supporting evidence or examples. As written, the bullet points read like a laundry list of ideas without a clear theme. The set-up sentence is also vague (and, by the way, should be rewritten to at least be a full sentence). Here are some suggestions an LLM gave for topic sentences; all these suggestions seem to me to illustrate that it's actually not clear what point you are trying to make.
\begin{itemize}
\item While calibration provides a useful guarantee, it has limitations and nuances that are worth exploring further.
\item Calibration is a necessary but not sufficient condition for a predictor to be informative, and there are several ways to strengthen this notion.
\item A closer examination of calibration reveals several important properties and extensions, including its relationship to Bayes optimality and multicalibration.
\end{itemize}}

\paragraph*{In this survey}

This survey addresses fundamental definitional and computational questions related to calibration. The definitional questions is how do we measure calibration error? The computational challenge is how do we make predictions that are well calibrated? \ilcomment{I notice that you like to mix questions in the middle of sentences. This is somewhat unconventional and perhaps too informal for the importance of the questions you typically raise in these sentences. Here's an alternative way to express the same ideas that is more formal and standard: `This survey addresses two fundamental challenges in calibration: the definitional question of how to measure calibration error, and the computational challenge of how to make predictions that are well calibrated.'}

From a definitional standpoint, the key subtlety in \ilreplace{the first task}{defining appropriate measures of calibration error} comes from conditioning. Every definition of calibration involves some form of conditioning on predictions. While this conditioning is easily handled for perfectly calibrated predictors, it requires careful consideration when evaluating predictors that are not perfectly calibrated. 
Since predictions are real-valued, careful consideration must be given to how the conditioning is performed to avoid non-robust measures that are overly sensitive to small changes in predicted values.

From the computational standpoint, the natural model in which to study calibration has been the online setting, where we measure the regret/calibration error of our prediction strategy over $T$ time steps.\footnote{Note that the computational task of learning a calibrated predictor admits trivial solutions in the offline model; for instance, one can always predict the expectation of the label.}
The classic work of Foster and Vohra \cite{FosterV98} showed that sublinear calibration error (as measured by the expected calibration error or $\ECE$) is possible. The regret rate achieved in their work is $O(T^{2/3})$.\footnote{The regret rate is $T$ times the calibration error on the uniform distribution over the $T$ time steps.}
It is known that regret rates of $O(\sqrt{T})$ are not possible for $\ECE$, and figuring out the optimal regret achievable is an active area of research (see, e.g., \cite{breaking}). However, recent results have revealed that this hardness stems from the discontinuity of $\ECE$. More robust notions of calibration , which we will discuss in this survey, actually admit prediction strategies that achieve $O(\sqrt{T})$ regret rates \cite{dist-online,elementary}. 

In addressing both these questions, we hope to highlight how the {\em indistinguishabilty} viewpoint on calibration plays a key role, especially in trying to define what calibration error should mean, and what properties of a perfectly calibrated predictor we might want our (imperfectly calibrated) predictors to approximate. 

\paragraph*{Organization of this survey}
In Section \ref{sec:cal-error}, we introduce various calibration measures, beginning with the expected calibration error, and discuss some of their limitations. This leads us to the definition of distance to calibration in Section \ref{sec:dtc}, which aims to define a ground truth notion of how far from calibrated a predictor is. In Section \ref{sec:decision}, we describe Calibration decision loss, which looks at calibration from an economics perspective, through the eyes of a downstream decision maker who wants to use the predictions of a predictor to optimize their utility. 
Finally, we review the active area of research on online calibration in \Cref{sec:online}.
In the interest of brevity, we omit most proofs from the survey. We direct the interested reader to the arXiv for a fuller version of this article that includes full proofs and some additional material.

}

\section{Expected Calibration Error}
\label{sec:ece}

We start with what is arguably the most popular metric for measuring calibration error: the expected calibration error or $\ECE$. We examine some of its shortcomings, which will guide us in formulating other notions of approximate calibration.

\begin{definition}
    The \emph{expected calibration error} of a predictor $p$ under $\D^*$ is defined as $\ECE(p, \D^*) = \ex\left|\ex[\y^*|p(\x)]  - p(\x)]\right|$.
\end{definition}

Some notes on the definition of ECE:
\begin{itemize}
\item While perfect calibration requires $\ex[\y^*|p(\x)] = p(\x)$, ECE allows for some slack in the equality, and measures the average deviation over all $p$. 
\item We have defined ECE as measuring the absolute deviation between $\ex[\y^*|p(\x)]$ and $p(\x)$. We could instead have used the square or the $q^{th}$ power for $q \geq 1$ and defined $\ECE_q(p, \D^*) = \ex[|\ex[\y^*|p(\x)] - p(\x)|^q]^{1/q}$. By the convexity of $t^q$, $\ECE_q$ is an increasing function of $q$. 
\end{itemize}

For a better understanding of ECE, we look at two alternative characterizations. The first characterizes it in terms of the maximum inner product with a distinguisher $b$ which is a bounded function on $[0,1]$.

\begin{lemma}
\label{lem:ece-bounded}
    Let $B= \{b:\zo \to [-1,1]\}$ be the family of all bounded functions. Then $\ECE(p, \D^*) = \max_{b \in B}\E_{J^*}[b(p(\x))(\y^* - p(\x))]$.
\end{lemma}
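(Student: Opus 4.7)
The plan is a direct tower-property computation followed by an optimal choice of $b$. First I would rewrite the objective using the tower property of conditional expectation. Since $b(p(\x))$ is measurable with respect to $p(\x)$, we have
\[
\E_{J^*}[b(p(\x))(\y^* - p(\x))] = \E\bigl[b(p(\x))\,\E[\y^* - p(\x)\mid p(\x)]\bigr] = \E\bigl[b(p(\x))\,(\E[\y^*\mid p(\x)] - p(\x))\bigr].
\]
Writing $g(v) := \E[\y^*\mid p(\x)=v] - v$, the quantity to be maximized is $\E[b(p(\x))\,g(p(\x))]$ over all $b:[0,1]\to[-1,1]$.

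Next I would prove the upper and lower bounds. For the upper bound, since $|b(v)| \leq 1$ pointwise,
\[
\E[b(p(\x))\,g(p(\x))] \leq \E[|b(p(\x))|\cdot|g(p(\x))|] \leq \E[|g(p(\x))|] = \ECE(p,\D^*),
\]
by the definition of $\ECE$. For the lower bound, pick the distinguisher $b^\star(v) := \sgn(g(v))$, which lies in $B$ since $|\sgn(\cdot)|\leq 1$. Then
\[
\E[b^\star(p(\x))\,g(p(\x))] = \E[\sgn(g(p(\x)))\,g(p(\x))] = \E[|g(p(\x))|] = \ECE(p,\D^*),
\]
so the supremum is attained and equals $\ECE(p, \D^*)$.

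There is essentially no obstacle here beyond the bookkeeping; the only mild subtlety is making sure the optimal $b^\star$ is a legitimate element of $B$, which is immediate since $\sgn$ takes values in $\{-1,0,1\}\subseteq[-1,1]$, and checking that the conditional expectation $\E[\y^*\mid p(\x)]$ is well-defined and can be pulled out against the $p(\x)$-measurable factor $b(p(\x))$. I would note in passing that the same argument shows $\ECE_q$ admits a dual characterization via distinguishers in the $L^{q'}$ unit ball for the conjugate exponent $q'$, which will foreshadow the weighted calibration measures introduced in the next section.
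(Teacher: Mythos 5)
Your proof is correct and takes essentially the same approach as the paper: condition on $p(\x)$, reduce the objective to $\E[b(p(\x))g(p(\x))]$ with $g(v) = \E[\y^*\mid p(\x)=v]-v$, bound it pointwise by $\E[|g(p(\x))|]=\ECE$, and attain equality with $b^\star = \sgn\circ g$. If anything your write-up is slightly tidier: the paper's chain opens with the inequality $\E[b(p(\x))(\y^*-p(\x))] \le \E[b^*(p(\x))(\y^*-p(\x))]$, which is not a pointwise domination but requires the tower-property step you make explicit; you also separate the upper bound (any $b$) from the attainment (the specific $b^\star$), which is cleaner than the paper's single chain.
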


\begin{proof}
    For each $v \in \Im(p)$, define the function 
    \[ b^*(v) = \sgn(\E[\y^*|p(\x) = v] - v).\]
    Then observe that
    \begin{align*} 
    \E[b(p(\x))(\y^* - p(\x))] & \leq \E[b^*(p(\x))(\y^* - p(\x))]\\ &=\E[\sgn(\E[\y^*|p(\x)] - p(\x))(\E[\y^*|p(\x)] - p(\x))]\\
    &= \E[\left|\E[\y^*|p(\x)] - p(\x)\right|]\\
    &= \ECE(p, \D^*),
    \end{align*}
which proves the equality.
\end{proof}

 For two distributions $\D_1, \D_2$ on a domain $\X$, we define
\[ \TV(\D_1, \D_2) = \max_{S \subseteq \X} |\D_1(S) - \D_2(S)|.
\footnote{When the space $\X$ is infinite, we must restrict $S$ to be measureable, but we will ignore this and other such subtleties.} \]

We state the second characterization in terms of total variation distance.
\begin{lemma}
\label{lem:ece-tv}
    We have $\ECE(p, \D^*) = \TV(J^*, J^p)$.
\end{lemma}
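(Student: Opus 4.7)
The plan is to translate the dual characterization of total variation distance into a statement about distinguishers on $[0,1]\times\zo$, and then reduce those distinguishers to the form of Lemma \ref{lem:ece-bounded} so that the quantity equals $\ECE(p,\D^*)$.

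First I would invoke the standard identity
\[
\TV(J^*,J^p) \;=\; \sup_{h:[0,1]\times\zo \to [0,1]} \bigl(\E_{J^*}[h(p(\x),\y^*)] - \E_{J^p}[h(p(\x),\yp)]\bigr),
\]
which follows from writing $J^*(S)-J^p(S) = \E_{J^*}[\mathbf{1}_S] - \E_{J^p}[\mathbf{1}_S]$ and noting that the extremal test function is an indicator (and that swapping $J^*$ with $J^p$ covers the other side of the absolute value).

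Next, any $h:[0,1]\times\zo \to [0,1]$ is affine in its second argument, so I can write
\[
h(v,y) \;=\; h(v,0) + \bigl(h(v,1)-h(v,0)\bigr)\,y \;=\; h(v,0) + b(v)\,y,
\]
where $b(v) := h(v,1)-h(v,0)$ takes values in $[-1,1]$. Since $J^*$ and $J^p$ share the same marginal on $p(\x)$, the $h(v,0)$ term contributes equally to both expectations and cancels. Using tower and the fact that $\E[\yp\mid p(\x)]=p(\x)$, the difference becomes
\[
\E_{J^*}[h] - \E_{J^p}[h] \;=\; \E\bigl[b(p(\x))\,\y^*\bigr] - \E\bigl[b(p(\x))\,p(\x)\bigr] \;=\; \E\bigl[b(p(\x))\,(\E[\y^*\mid p(\x)] - p(\x))\bigr].
\]

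Finally, I would observe that the map $h\mapsto b$ is surjective onto $\{b:[0,1]\to[-1,1]\}$: given any such $b$, take $h(v,1)=\max(b(v),0)$ and $h(v,0)=\max(-b(v),0)$, both of which lie in $[0,1]$. Therefore
\[
\TV(J^*,J^p) \;=\; \sup_{b:[0,1]\to[-1,1]} \E\bigl[b(p(\x))\,(\E[\y^*\mid p(\x)] - p(\x))\bigr] \;=\; \ECE(p,\D^*),
\]
where the last equality is Lemma \ref{lem:ece-bounded}. There is no real obstacle here; the only subtlety is the two-way correspondence between $[0,1]$-valued test functions $h$ on the product space and $[-1,1]$-valued functions $b$ on $[0,1]$, which is what makes the bounded-distinguisher characterization of $\ECE$ exactly match the variational form of $\TV$.
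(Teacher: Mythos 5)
Your proof is correct and follows essentially the same route as the paper: both arguments turn a distinguisher on $[0,1]\times\{0,1\}$ into a bounded weight function $b:[0,1]\to[-1,1]$, compute the distinguishing advantage as $\E[b(p(\x))(\y^*-p(\x))]$, and invoke Lemma~\ref{lem:ece-bounded}. The only (minor) difference is that you use the fractional variational form of $\TV$ over $[0,1]$-valued test functions $h$ and argue that $h\mapsto b$ is surjective, which proves both inequalities at once, whereas the paper works with indicator sets $S$ and separately constructs the optimal $S$ realizing $b_S=\sgn(\E[\y^*\mid p(\x)=\cdot]-\cdot)$ for the lower bound.
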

\begin{proof}
By the definitions of $J^*$ and $J^p$, we have
    \begin{align}
    J^*(S) & = \E_{\x}[p^*(\x)\mathbb I[(p(\x),1)\in S] + ( 1 - p^*(\x))\mathbb I[(p(\x),0)\in S]], \label{eq:j-s}\\
    J^p(S) & = \E_{\x}[p(\x)\mathbb I[(p(\x),1)\in S] + ( 1 - p(\x))\mathbb I[(p(\x),0)\in S]].\label{eq:jp-s}
    \end{align}
For every $v\in [0,1]$, define
\begin{equation}
\label{eq:bs}
b_S(v):= \mathbb I[(v,1)\in S] - \mathbb I[(v,0)\in S]\in [-1,1].
\end{equation}
Taking the difference between \eqref{eq:j-s} and \eqref{eq:jp-s}, we get
\begin{align}
J^*(S) - J^p(S) & = \E_\x[(p^*(\x) - p(\x))(\mathbb I[(p(\x),1)\in S] - \mathbb I[(p(\x),0)\in S])]\notag \\
& = \E[(\y^* - p(\x))b_S(p(\x))].\label{eq:j-diff}
\end{align}
Combining this with \Cref{lem:ece-bounded} we get $|J^*(S) - J^p(S)| \le \ECE(p,\D^*)$, and thus $\TV(J^*,J^p) \le \ECE(p,\D^*)$.

Now we choose $S$ such that for every $v\in [0,1]$,
\begin{align*}
    & (v,1)\in S \text{ and } (v,0)\notin S \quad \text{if $\sgn(\E[\y^*|p(\x) = v] - v) = 1$};\\
    & (v,1)\notin S \text{ and } (v,0)\in S \quad \text{if $\sgn(\E[\y^*|p(\x) = v] - v) = -1$}.
\end{align*}
Plugging this into \eqref{eq:bs}, we get $b_S(v) = \sgn(\E[\y^*|p(\x) = v] - v)$. By \eqref{eq:j-diff} and the proof of \Cref{lem:ece-bounded},
\[
J^*(S) - J^p(S) = \E[(\y^* - p(\x))b_S(p(\x))] = \ECE(p,\D^*).
\]
This implies $\TV(J^*,J^p) \ge \ECE(p,\D^*)$.
\end{proof}

\paragraph*{The trouble with $\ECE$}

At first glance, $\ECE$ seems to be a reasonable measure of calibration error. However there are (at least) a couple of problems with it: it is hard to efficiently estimate (even in the binary classification setting), and it is very discontinuous. Thus it fails desiderata (2-4).

The computational difficulty stems from Lemma \ref{lem:ece-bounded}. Estimating the $\ECE$ is equivalent to finding the best witness $b \in B$. This is essentially a learning problem over a class with infinite VC dimension. Indeed, one can show that sample complexity of estimating $\ECE$ can be as large as $\Omega(\sqrt{|\X|})$. Ideally, we would like to complexity to be independent of the domain size, and depending only on the desired estimation error. 

The continuity problems are hinted at by Lemma \ref{lem:ece-tv}. While total variation distance is a good distance measure for distributions over discrete domains, it is not ideal for continuous domains. And our setting involving distributions over predictions in $[0,1]$ is inherently continuous. As the next example illustrates, $\ECE$ turns out to be highly discontinuous in the predictions of our predictor. 

\begin{itemize}
\item Let $\D_2$ be the uniform distribution a two point space $\{(a,0),(b,1)\}$, where $a$ is always labeled $0$ and $b$ is labeled $1$. 
\item Consider the predictor $p_0$ which predicts $1/2$ for both $a$ and $b$. It is perfectly calibrated, hence $\ECE(p_0) = 0$. 
\item For $\eps > 0$, define the predictor $p_\eps$ where
$p_\eps(a) = 1/2 - \eps,  p_\eps(b) = 1/2 + \eps$.
It is easy to verify that $\ECE(p_\eps) = 1/2 - \eps$. 
\end{itemize}
Think of $\eps$ being infinitesimally small but positive, so that $p_\eps$ is extremely close to $p_0$. Intuitively, $p_\eps$ is very close to being perfectly calibrated, it only requires a small perturbation of the lower order bits.  Yet, the $\ECE$ is close to $1/2$ for $p_\eps$, whereas it is $0$ for $p_0$. 

There are many ad-hoc fixes in practice that aim to get around these difficulties. For instance, bucketed $\ECE$ divides the interval $[0,1]$ into $b$ equal sized buckets, rounds the predictions in  each bucket (say to the midpoint) and then measures the $\ECE$ of the discretized predictor. But \cite{BlasiokGHN23} observe that this results in a bucketed $\ECE$ which oscillates between $0$ and $1/2 - \eps$ depending on whether the number of buckets is odd or even! 

Are our issues with $\ECE$ small technicalities or symptoms of a bigger problem? We believe it is the latter. Assume you are training a predictive model, and you measure its $\ECE$ and find it to be large. Is this something you should worry about? Is your model truly miscalibrated (whatever that means)? Or is there an infinitesimal perturbation of its predictions that will make it perfectly calibrated? In general, there are sound reasons to prefer metrics that are reasonably smooth. 
It is also important for estimation to be efficient in terms of both samples and computation, which is not the case for $\ECE$.

\section{Weighted calibration error}
\label{sec:cal-error}

In this section, we will explore notions of approximate calibration that only require that $J^*$ and $J^p$ look similar to a family $W$ of distinguishers or weight functions. This results in a general template called weighted calibration, which is parametrized by the family $W$. Instantiating this notion with the family of bounded Lipschitz functions, we  derive the notion of smooth calibration \cite{kakadeF08}. We briefly describe some other notions of calibration from the literature that can be viewed as instantiations of this template. 

\subsection{Weighted calibration}
Weighted calibration error \cite{GopalanKSZ22} captures the extent to which a collection of distinguishing functions are able to distinguish $J^*$ from $J^p$. 
Since $J^*$ and $J^p$ are both distributions over $[0,1] \times \zo$, we consider distinguishing functions $f:[0,1] \times \zo \to [-1,1]$. Since the second argument to $f$ is Boolean, we can write  $f(v,y) = w(v)y + u(v)$. Hence, 
 \begin{align} 
 \E_{J^*}[f(\v, \y^*)] - \E_{J^p}[f(\v,\y^p)]  &= \E_{J^*}[w(\v)\y^*] - \E_{J^p}[w(\v)\y^p]\notag = \E_{J^*}[w(\v)\y^*] - \E_{J^p}[w(\v)\v]\notag\\
 & = \E_{J^*}[w(\v)(\y^* -\v)].\label{eq:diff-exp}
\end{align}
where the first and third equalities hold because $\v$ is identically distributed under $J^*$ and $J^p$, and the second is because $\E[\y^p|\v] = \v$. This tells us that we can limit ourselves to distinguishers of the form $f(v,y) = w(v)y$, and the distinguishing advantage can be thought of as an expectation under the single distribution $J^*$ (Equation \eqref{eq:diff-exp}). This leads to the following definition from \cite{GopalanKSZ22}.

\begin{definition}[Weighted calibration error \cite{GopalanKSZ22}]
    Let $W= \{w:[0,1] \to [-1,1]\}$ be a family of weight functions. The \emph{$W$-weighted calibration error} of the predictor $p:\X \to [0,1]$ is defined as
    \[ \CE_W(p, \D^*) = \max_{w \in W}\left|\ex_{\D^*}[w(p(\x))(\y^* - p(\x))]\right|. \]
\end{definition}

The definition of weighted calibration error suggests a natural computational problem: the problem of calibration auditing for a weight family $W$. This is the computational problem of deciding whether $\CE_W(p, \D^*)$  is $0$ or exceeds $\alpha$, given access to random samples $(p(\x), \y^*)$ from $\D^*$. This problem turns out to be closely related to agnostic learning for the class $W$, as shown by \cite{GopalanHR24}. 

If we instantiate weighted calibration with  $W = B$ where $B$ is the set of all bounded functions introduced in \ref{lem:ece-bounded}, we recover $\ECE$. But this also illustrates why $\ECE$ is hard to compute efficiently: the set $B$ has infinite VC dimension, hence it cannot be learnt efficiently.

Note that we could have defined the weighted calibration error $\CE_W$ as a function of $J^*$, the joint distribution of $(p(\x), \y^*)$, rather than the pair $(p, \D^*)$. We prefer mentioning $p$ explicitly for clarity, but it is important to note that $\CE_W$ only depends on $J^*$. Indeed, most common measures of calibration error and loss only depend on the distribution of $J^*$. For instance, the cross-entropy loss and square loss only depend on how labels and predictions are jointly distributed, not on whether we are labeling images or tabular data; if we predict $p(x) = 0.7$ and the label is $1$, that fixes the loss suffered at $x$, regardless of the features $x$.

\subsection{Smooth calibration}

Smooth calibration, introduced by \cite{kakadeF08} is an instantiation of weighted calibration that restricts the class of weight functions to Lipschitz continuous functions. This ensures that small perturbations of the predictor do not result in large changes in the calibration error.

\begin{definition}
\label{def:smooth}
    Let $L = \{l:[0,1] \to [-1,1]\}$ denote the subset of $1$-Lipschitz functions from $B$. Define the \emph{smooth calibration error} of the predictor $p$ under the distribution $\D^*$ as
    $\smCE(p, \D^*) = \CE_L(p, \D^*)$.
\end{definition}

By only allowing Lipschitz weight functions, Smooth calibration ensures that the calibration error does not change dramatically under small perturbations of the predictor.\footnote{Note that  Lemma \ref{lem:ece-bounded} tells us that there exists a bounded function $b_\eps$ that explains the high $\ECE$ for $p_\eps$, specifically,
$b_\eps(v) = \sgn(v - 1/2)$. This function is discontinuous near $1/2$, which causes the extreme sensitivity to perturbations. }
Given predictors $p_1, p_2: \X \to [0,1]$ and a distribution $\D^*$ on $\X$, let the expected $\ell_1$ distance between them be
\[ d(p_1, p_2) = \E_{\D^*}[|p_1(\x) - p_2(\x)|].\]
Smooth calibration error is Lipschitz in this distance.
\begin{lemma}
    For any weight family $W \subseteq L$, $\CE_W(p, \D^*)$ is $2$-Lipschitz in $d$.
\end{lemma}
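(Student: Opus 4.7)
The plan is to bound, for each fixed weight $w \in W$, the quantity $|\E[w(p_1(\x))(\y^* - p_1(\x))] - \E[w(p_2(\x))(\y^* - p_2(\x))]|$ by $2 d(p_1, p_2)$, and then take the supremum over $w$ using the fact that $|\max_w |A_w| - \max_w |B_w|| \le \max_w |A_w - B_w|$.

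To carry out the per-$w$ bound, I would introduce the telescoping decomposition
\begin{align*}
w(p_1(\x))(\y^* - p_1(\x)) - w(p_2(\x))(\y^* - p_2(\x))
&= \bigl(w(p_1(\x)) - w(p_2(\x))\bigr)(\y^* - p_1(\x)) \\
&\quad + w(p_2(\x))\bigl(p_2(\x) - p_1(\x)\bigr).
\end{align*}
The first summand is controlled using the $1$-Lipschitz property of $w$, giving $|w(p_1(\x)) - w(p_2(\x))| \le |p_1(\x) - p_2(\x)|$, together with the trivial bound $|\y^* - p_1(\x)| \le 1$ (since $\y^* \in \zo$ and $p_1(\x) \in [0,1]$). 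The second summand is bounded using $|w(p_2(\x))| \le 1$. Taking expectations and using the triangle inequality yields $2\,\E|p_1(\x) - p_2(\x)| = 2 d(p_1, p_2)$.

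Finally, apply the elementary fact that $|\max_w |a_w| - \max_w |b_w|| \le \sup_w |a_w - b_w|$ (which follows from reverse triangle inequality applied twice) to conclude
\[ |\CE_W(p_1, \D^*) - \CE_W(p_2, \D^*)| \le 2 d(p_1, p_2). \]

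I do not expect any real obstacles: the argument is a straightforward $\eps$-$\delta$-style continuity computation, where both the Lipschitz condition on $w$ and the boundedness of $w$ and $\y^* - p(\x)$ are essential. The only minor subtlety is handling the outer absolute value / supremum correctly, which the two-line max-of-moduli inequality above resolves cleanly.
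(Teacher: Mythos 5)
Your proof is correct and uses essentially the same approach as the paper: the paper also bounds the per-$w$ difference by $2d(p_1,p_2)$ via a telescoping decomposition through an intermediate term, using the $1$-Lipschitz property once and the boundedness of $w$ and $\y^*-p(\x)$ once. The only cosmetic difference is the choice of intermediate ($w(p_1(\x))(\y^*-p_2(\x))$ in the paper versus the symmetric split you wrote), and you are slightly more explicit about passing the per-$w$ bound through the outer supremum.
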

\begin{proof}
    It suffices to prove that for every pair of predictors $p_1,p_2$ and every $w\in W$, it holds that
    \[
    |\E\nolimits_{\D^*}[w(p_1(\x))(\y^* - p_1(\x))] - \E\nolimits_{\D^*}[w(p_2(\x))(\y^* - p_2(\x))]| \le 2d(p_1,p_2).
    \]
    To this end, we consider the intermediate quantity $\E\nolimits_{\D^*}[w(p_1(\x))(\y^* - p_2(\x))]$. We have
    \begin{align*}
        & |\E\nolimits_{\D^*}[w(p_1(\x))(\y^* - p_1(\x))] - \E\nolimits_{\D^*}[w(p_1(\x))(\y^* - p_2(\x))]|\\
        = {} & |\E[w(p_1(\x))(p_2(\x) - p_1(\x))]|\\
        \le {} & d(p_1,p_2) \tag{because $|w(p_1(\x))| \le 1$},
    \end{align*}
and
    \begin{align*}
        & |\E\nolimits_{\D^*}[w(p_1(\x))(\y^* - p_2(\x))] - \E\nolimits_{\D^*}[w(p_2(\x))(\y^* - p_2(\x))]|\\
        \le {} & \E|w(p_1(\x)) - w(p_2(\x))| \tag{because $|\y^* - p_2(\x)| \le 1$}\\
        \le {} & d(p_1,p_2). \tag{because $w$ is $1$-Lipschitz}
    \end{align*}
Using the triangle inequality to combine the two inequalities above completes the proof.
\end{proof}

Returning to the example above with $p_0$ and $p_\eps$, restricting to Lipschitz distinguishers means that smooth calibration considers $p_\eps$ to also be well calibrated, since its smooth calibration error is $O(\eps)$.

An alternate view of smooth calibration is in terms of earthmover distance between $J^*$ and $J^p$.  Consider the $\ell_1$ metric on $[0,1] \times \zo$ where $\ell_1((v,y), (v',y')) = |v - v'| + |y - y'|$. For two distributions $J, J'$ on $[0,1] \times \zo$, we denote the earthmover distance between two distributions under the $\ell_1$ metric as $\EMD(J, J')$. Smooth calibration captures the earth-mover distance between $J^*$ and $J^p$.

\begin{lemma}
\label{lem:smce-em}
    We have $\EMD(J^*, J^p)/2 \leq \smCE(p, \D^*) \leq \EMD(J^*, J^p)$.
\end{lemma}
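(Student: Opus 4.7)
The plan is to rely on Kantorovich--Rubinstein duality, which expresses $\EMD(J^*, J^p)$ under the $\ell_1$ metric on $[0,1]\times\zo$ as the supremum of $|\E_{J^*}[f]-\E_{J^p}[f]|$ over $1$-Lipschitz $f:[0,1]\times\zo \to \R$. Any such $f$ decomposes uniquely as $f(v,y) = w(v)\,y + u(v)$ with $w(v) = f(v,1)-f(v,0)$ and $u(v)=f(v,0)$, and by the same reasoning as in \eqref{eq:diff-exp} the $u$-part contributes identically under $J^*$ and $J^p$, so
\[
\E_{J^*}[f] - \E_{J^p}[f] = \E[w(p(\x))(\y^*-p(\x))].
\]
Both inequalities of the lemma therefore reduce to comparing the class $L$ of allowed weight functions in $\smCE$ with the set of $w$'s that arise from $1$-Lipschitz $f$ on the product space.

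For the upper bound $\smCE(p,\D^*) \le \EMD(J^*, J^p)$, given $l \in L$ I would lift it to the product by defining $f(v,y) := l(v)\,y$ and checking that $f$ is $1$-Lipschitz on $([0,1]\times\zo,\ell_1)$: along $v$ this uses that $l$ is $1$-Lipschitz and $|y|\le 1$, and along $y$ it uses $|l(v)| \le 1$, which matches the unit distance between $(v,0)$ and $(v,1)$. Substituting into the displayed identity and maximizing over $l\in L$ (with sign flip if needed) recovers $\smCE$ on the left, which is bounded by $\EMD$ on the right via KR duality.

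For the lower bound $\EMD(J^*, J^p)/2 \le \smCE(p,\D^*)$, given a $1$-Lipschitz $f$ I would project it to $L$ by analyzing $w(v)=f(v,1)-f(v,0)$: it is the difference of the two $1$-Lipschitz slice functions $v\mapsto f(v,1)$ and $v\mapsto f(v,0)$, hence $2$-Lipschitz, and it is bounded by $1$ in absolute value because $\ell_1((v,0),(v,1)) = 1$. Thus $w/2 \in L$, and the displayed identity gives
\[
|\E_{J^*}[f]-\E_{J^p}[f]| = 2\,|\E[(w/2)(p(\x))(\y^*-p(\x))]| \le 2\,\smCE(p,\D^*).
\]
Taking the supremum over $f$ finishes the bound.

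The main obstacle, and the source of the factor $2$, is purely structural: distinguishers appearing in $\smCE$ have the restricted bilinear form $l(v)\,y$ and therefore span a proper subspace of $1$-Lipschitz functions on $[0,1]\times\zo$, forcing us to rescale a general $2$-Lipschitz $w$ to land in $L$. Once this observation is in place, the rest is bookkeeping using KR duality and the cancellation already established in \eqref{eq:diff-exp}.
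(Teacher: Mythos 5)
Your proposal is correct and follows essentially the same route as the paper: both apply Kantorovich--Rubinstein duality, use the decomposition $f(v,y)=w(v)y+u(v)$ with $w(v)=f(v,1)-f(v,0)$ (bounded by $1$, $2$-Lipschitz) for the lower bound, and lift $l\in L$ to $f(v,y)=l(v)y$ and verify $1$-Lipschitzness for the upper bound. The only cosmetic difference is that you make the rescaling ``$w/2\in L$'' explicit, whereas the paper states the factor-$2$ bound directly.
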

This lemma should be contrasted with Lemma \ref{lem:ece-tv}, which characterizes $\ECE$ in terms of the total variation distance.
\begin{proof}[Proof of \Cref{lem:smce-em}]
Let $\v\in [0,1],\y^*,\y^p\in \{0,1\}$ be random variables such that the joint distribution of $(\v,\y^*)$ is $J^*$ and the joint distribution of $(\v,\y^p)$ is $J^p$.
    By the Kantorovich-Rubinstein Duality,
    \[
    \EMD(J^*, J^p) = \sup_{f\in F}(\E[f(\v,\y^*)] - \E[f(\v,\y^p)]),
    \]
where $F$ consists of all functions $f:[0,1]\times \{0,1\}\to \R$ that are $1$-Lipschitz in the $\ell_1$ metric $\ell_1((v,y), (v',y')) = |v - v'| + |y - y'|$.
    As in \eqref{eq:diff-exp}, we can write $f(v,y)$ as $w(v) y + u(v)$ using functions $w,u:[0,1]\to \R$, where $w(v) = f(v,1) - f(v,0)\in [-1,1]$ is a $2$-Lipschitz function of $v$. By \eqref{eq:diff-exp},
    \[
    \E[f(\v,\y^*)] - \E[f(\v,\y^p)] = \E[w(\v)(\y^* - \v)] \le 2\, \smCE(p,\D^*).
    \]
This proves $\EMD(J^*, J^p)/2 \leq \smCE(p, \D^*)$.

Now we take an arbitrary $1$-Lipschitz function $w:[0,1]\to [-1,1]$ and define $f(v,y):= w(v)y\in [-1,1]$ for every $v\in [0,1]$ and $y\in \{0,1\}$. By \eqref{eq:diff-exp},
\[
|\E[w(\v)(\y^* - \v)]| = |\E[f(\v,\y^*)] - \E[f(\v,\y^p)]| \le \EMD(J^*, J^p),
\]
where we used the fact that $f$ is a $1$-Lipschitz function of $(v,y)\in [0,1]\times \{0,1\}$ in the $\ell_1$ metric. This proves $\smCE(p, \D^*) \leq \EMD(J^*, J^p)$.
\end{proof}

We have defined smooth calibration error in terms of the family of $1$-Lipschitz distinguishers. But since an $L$-Lipschitz function for $L >1$ can be made $1$-Lipschitz by rescaling the range by $L$, the calibration error can only increase by $L$ even if we allow $L$-Lipschitz distinguishers.

\subsection{Other notions of weighted calibration}

We have seen two notions of weighted calibration so far: $\ECE$ and $\smCE$. Several other calibration metrics that have been considered in the literature can be naturally viewed as instances of weighed calibration. We list some of them below.

\begin{itemize}
    \item Low-degree calibration \cite{GopalanKSZ22} corresponds to the case where $W = P_d$ consists of degree $d$ polynomials. This class is fairly Lipschitz (since polynomials have bounded derivatives. The main attraction of this notion is that it is efficient to computer, even in the multiclass setting.
    \item In Kernel calibration \cite{KSJ18,BlasiokGHN23} the family of weight functions lies in a Reproducing Kernel Hilbert Space. There are many choices of kernel possible, such as the Laplace kernel, the Gaussian kernel or the polynomial kernel, each of these results in distinct calibration measures with their own properties.
\end{itemize}

\eat{

\ilcomment{Abrupt transition; would be improved with a topic sentence. Since the first set of definitions is `smooth calibration' which `which restricts the class of distinguishers to Lipschitz continuous functions' (if you move the topic sentence as I suggested), a parallel sentence will help the reader get their bearings. E.g. `Another smoother notion of calibration is low-degree calibration, which restricts the class of distinguishers to low-degree polynomials'. } Another restricted class of distinguishers that turns out to be important is the class of low-degree polynomials.

\begin{definition}
    Let $P_d = \{w:[0,1] \to [-1,1]\}$ denote the subset of $B$ that are polynomials of degree at most $d$. The \emph{degree-$d$ calibration error} of the predictor under the distribution $\D^*$ is $\degCE_d(p, \D^*) = \CE_{P_d}(p, \D^*)$.
\end{definition}

There are many ways to normalize the set of low-degree polynomials. \cite{GopalanKSZ22}, who introduced this notion require that the sum of coefficients be bounded by $1$, to ensure efficient auditing. \cite{GopalanHR24} bound the squared norm, which is weaker, and show that efficient auditing is still possible using kernel methods. We refer the readers to those works for efficiency considerations. 

\ilcomment{This paragraph can be made much more concise.}
Note that a degree $d$ polynomial that is bounded in the range $[-1,1]$ has derivative is bounded by $O(d^2)$ by Markov's theorem. So we can view low degree polynomials as essentially being a subset of Lipschitz functions, at least for small $d$. In the other direction, Jackson's theorem tells us that every $1$-Lipschitz function can be $\eps$-approximated by a polynomial of degree $O(1/\eps)$. To summarize, we encourage the reader to think of $P_d$ as a strict subset of $L$ for small $d$. When $d$ grows, $P_d$ starts to give close approximations to every function in $L$, while also containing fairly non-Lipschitz functions.

}

\eat{

\begin{lemma}
\label{lem:cl3}
    We have $\dCE(p_1, \D^*) \leq 2\eps^2$.
\end{lemma}
\begin{proof}
    We have proved that $p_2 \in \Cal(\D^*)$, so it suffices to prove that $d_\X(p_1, p_2) \leq 2\eps^2$. This holds since $p_1$ and $p_2$ differ by $2\eps$ when $x_1 \neq x_2$ which happens with probability $\eps$,  else $p_1 = p_2$. 
\end{proof}
}

\section{Calibration Error for Decision Making}
\label{sec:decision}

\eat{

We have discussed the notion of perfect calibration: equality between the distributions $J^* = (p(\x),\y^*)$ and $J^p = (p(\x), \y^p)$ (\Cref{def:perfect-cal}). We presented natural measures of the calibration error 
based on the advantage of distinguishing $J^*$ from $J^p$ using a family of distinguishers, or the distance between $J^*$ and $J^p$ according to specific metrics. However, there are natural measures of similarity between probability distributions, such as the KL divergence, that cannot be expressed as a distance metric or a distinguishing advantage. }

In this section, we will explore a second approach to relaxing the definition of perfect calibration, where rather than asking $J^*$ and $J^p$ be identical, we require them to be close when measured under a suitable divergence. This leads to another important measure of the calibration error, the Calibration Decision Loss (CDL), introduced recently by Hu and Wu \cite{cdl}. Underlying the notion of CDL is a concrete and natural quantification of the economic value of calibration from the perspective of downstream decision making.

We define the notion of CDL in \Cref{sec:cdl} and discuss its alternative formulation using Bregman divergences between $J^*$ and $J^p$ in \Cref{sec:div}. A key tool we use to prove this Bregman divergence formulation is a classic characterization of \emph{proper scoring rules} \cite{mccarthy,savage,gneiting}.

\subsection{Calibration Decision Loss}
\label{sec:cdl}

The definition of the Calibration Decision Loss comes naturally when we look at calibration through an economic lens, from the perspective of downstream decision makers.
What does calibration mean to a person who uses the predictions (e.g.\ chance of rain) to make downstream decisions (e.g.\ take an umbrella or not)?
We will show that a calibrated predictor provides a concrete \emph{trustworthiness} guarantee to \emph{every} payoff-maximizing downstream decision maker (\Cref{thm:cal-opt}). This observation gives not only a characterization of perfect calibration, but also a natural way of quantifying the calibration error of a miscalibrated predictor, 
using the payoff loss caused by trusting the (miscalibrated) predictor in downstream decision making. This way of quantifying the calibration error leads exactly to Calibration Decision Loss (\Cref{def:cdl}). 

We start by formally defining decision tasks.
A decision task $\T$ has two components: an action space $A$ and a payoff function $u:A\times \{0,1\}\to \R$. Given a decision task $\T = (A,u)$, the decision maker must pick an action $a\in A$ in order to maximize the payoff $u(a,y)\in \R$. Here, the payoff depends not only on the chosen action $a$, but also on the true outcome $y\in \{0,1\}$ unknown to the decision maker.
For example, if the outcome $y\in \{0,1\}$ represents whether or not it will be rainy today, a natural decision task may have two actions to choose from: $A = \{\text{take umbrella}, \text{not take umbrella}\}$. Each combination $(a,y)$ of action and outcome corresponds to a payoff value $u(a,y)$ that may depend on the susceptibility to rain and the inconvenience of carrying an umbrella.

Prediction enables decision making under uncertainty.
While the decision maker is unable to observe the true outcome $y$ before choosing the action, we assume that they are assisted by a prediction $v\in [0,1]$. 
In the ideal case, the prediction correctly represents the probability distribution of the true outcome. That is, the outcome $y$ follows the Bernoulli distribution with parameter $v$ (denoted $\y\sim v$). To maximize the expected payoff, the decision maker should choose the action
\begin{equation}
\label{eq:best-response}
\sigma_{\T}(v) \in \argmax_{a\in A}\E_{\y\sim v}u(a,\y)
\end{equation}
in response to the (correct) prediction $v$. We call the function $\sigma_\T:[0,1]\to A$ the \emph{best-response function}.
Throughout the section, we assume that each decision task $\T = (A,u)$ is associated with a well-defined best-response function. That is, we focus on tasks $\T$ where the $\argmax$ in \eqref{eq:best-response} is always non-empty.

In reality, predictions are seldom perfectly correct. It is thus unclear whether applying the best-response function would still lead to optimal payoff. The following theorem tells us that as long as the predictions are calibrated, the best response function remains the optimal mapping from predictions to actions, allowing the decision maker to \emph{trust the predictions as if they were correct}.

\begin{theorem}[Calibrated Predictions are Trustworthy]
\label{thm:cal-opt}
Let $\D$ be a joint distribution on $\X\times \{0,1\}$. For any perfectly calibrated predictor $p:\X\to [0,1]$ and any decision task $\T = (A,u)$, it holds that
\begin{equation}
\label{eq:cal-opt}
\E_{(\x,\y)\sim \D}[u(\sigma_\T(p(\x)),\y)] = \max_{\sigma:[0,1]\to A}\  \E_{(\x,\y)\sim \D}[u(\sigma(p(\x)),\y)].
\end{equation}
In other words, the maximum value of the expected payoff is attained when we choose $\sigma = \sigma_\T$. Conversely, if \eqref{eq:cal-opt} holds for every decision task $\T$, then the predictor $p$ is perfectly calibrated.
\end{theorem}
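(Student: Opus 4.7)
The plan is to prove the two directions separately, using conditional expectations with respect to $p(\x)$ in both cases.

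For the forward direction, assume $p$ is perfectly calibrated, and fix any task $\T = (A,u)$ and any candidate mapping $\sigma:[0,1]\to A$. I would apply the tower property to write
\[
\E_{(\x,\y)\sim \D}[u(\sigma(p(\x)),\y)] = \E_{v \sim p(\x)}\bigl[\E[u(\sigma(v), \y)\mid p(\x) = v]\bigr].
\]
Perfect calibration (\Cref{def:perfect-cal}) says $\E[\y \mid p(\x) = v] = v$, so since $\y$ is Boolean the conditional law of $\y$ given $p(\x) = v$ is exactly $\Ber(v)$. Therefore the inner expectation equals $\E_{\y\sim v}[u(\sigma(v),\y)]$, which is at most $\E_{\y\sim v}[u(\sigma_\T(v),\y)]$ by the defining maximization property of $\sigma_\T$ in \eqref{eq:best-response}. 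Pointwise domination under the outer expectation over $v$ then yields the desired inequality, and equality is attained at $\sigma = \sigma_\T$.

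For the converse, I would argue by contrapositive: if $p$ is not perfectly calibrated, exhibit a single task $\T$ that breaks \eqref{eq:cal-opt}. By the failure of calibration there exists $v_0 \in \Im(p)$ with $v_0^* := \E[\y \mid p(\x) = v_0] \neq v_0$ (and the conditioning event has positive probability). Assume WLOG $v_0 < v_0^*$ (the other case is symmetric). Choose any threshold $\tau \in (v_0, v_0^*)$ and define the two-action task $\T$ with $A = \{0,1\}$ and payoff $u(1,y) = y - \tau$, $u(0,y) = 0$. A direct calculation of $\E_{\y\sim v}[u(a,\y)]$ shows that $\sigma_\T(v) = \ind[v > \tau]$; in particular $\sigma_\T(v_0) = 0$, so the conditional payoff at $p(\x) = v_0$ under $\sigma_\T$ is $0$. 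Define $\sigma'$ to agree with $\sigma_\T$ everywhere except at $v_0$, where $\sigma'(v_0) = 1$. Then the conditional payoff at $p(\x) = v_0$ under $\sigma'$ equals $v_0^* - \tau > 0$, a strict improvement on a positive-probability event, while the two strategies agree elsewhere. Taking expectations, $\sigma'$ strictly outperforms $\sigma_\T$, so \eqref{eq:cal-opt} fails for this $\T$.

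The forward direction is essentially a tower-property and best-response argument and should be routine. The main obstacle is the converse: one must construct a single explicit task that exposes the miscalibration at $v_0$, and ensure the alternative policy $\sigma'$ genuinely improves on $\sigma_\T$ at a positive-probability level set of $p$. The threshold construction above makes the gap $v_0^* - \tau$ explicit and linear in the miscalibration, so it is natural to formalize; the only subtlety is handling the case where $\{p(\x) = v_0\}$ has measure zero, which in the continuous setting can be sidestepped by replacing the point $v_0$ with a small interval on which the signed calibration error has a definite sign and then averaging.
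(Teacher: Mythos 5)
Your argument is correct, and the forward direction coincides with the paper's: decompose the expectation by level sets of $p$ via the tower property, use the fact that perfect calibration makes the conditional law of $\y$ given $p(\x) = v$ exactly $\Ber(v)$, and invoke the pointwise best-response property of $\sigma_\T$. The paper packages the same computation as \Cref{lm:recal}, phrased in terms of the recalibration $\recal$, and then observes that $p = \recal$ for calibrated $p$.

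The converse is where you diverge. The paper avoids locating any particular miscalibrated prediction value by plugging in a single universal witness task: $\T_2$ with $A = [0,1]$ and quadratic payoff $u(a,y) = 1 - (a-y)^2$, for which $\sigma_{\T_2}(v) = v$. A short calculation (equation \eqref{eq:quadratic}) gives $\cfdl_{\T_2}(p,\D) = \E[(p(\x) - \recal(\x))^2] = \ECE_2(p,\D)^2$, which is zero iff $p = \recal$ almost everywhere, i.e., iff $p$ is perfectly calibrated. Your construction instead uses a two-action threshold task with $u(1,y) = y - \tau$, $u(0,y) = 0$ and $\tau$ placed between $v_0$ and $v_0^* = \E[\y\mid p(\x)=v_0]$; this is a perfectly valid (V-shaped) alternative that exposes an explicit payoff gap $v_0^* - \tau > 0$ on the level set $\{p(\x) = v_0\}$. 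The only price is the measure-theoretic subtlety you yourself flag: when level sets have zero probability, you must pass to an interval on which the signed defect $\E[\y\mid p(\x)] - p(\x)$ has a definite sign and positive mass, and then re-check that the threshold can be placed to separate $p(\x)$ from $\E[\y\mid p(\x)]$ uniformly on that interval. This is doable but requires care. The quadratic task sidesteps it entirely by averaging the squared defect over all level sets at once, and has the added benefit (exploited by the paper in \Cref{thm:relation}) of directly relating $\cfdl$ to $\ECE_2^2$. So: correct proof, same forward direction, genuinely different and slightly less economical converse.
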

We defer the proof of \Cref{thm:cal-opt} to \Cref{sec:cha} and discuss how it suggests a new calibration  measure.
According to the theorem, if a predictor $p$ is miscalibrated, then the right-hand side of \eqref{eq:cal-opt} is larger than the left-hand side for some decision task $\T$. The difference between the two sides is exactly the payoff loss incurred by the decision maker who follows the best-response strategy $\sigma_\T$ assuming (incorrectly) that the predictions were calibrated. Thus, a natural measure of the level of miscalibration is exactly this payoff loss.
For a fixed decision task $\T$, this payoff loss is termed the \emph{Calibration Fixed Decision Loss (CFDL)} \cite{cdl}.
Taking the worst-case payoff loss over all decision tasks $\T = (A,u)$ with bounded payoff functions $u:A\to [0,1]$, we get the Calibration Decision Loss (CDL).

\begin{definition}[Calibration Decision Loss (CDL) \cite{cdl}]
\label{def:cdl}
Let $\D$ be a joint distribution over $\X\times \{0,1\}$. Given a predictor $p:\X\to [0,1]$, we define its \emph{Calibration Fixed Decision Loss (CFDL)} with respect to a (fixed) decision task $\T = (A,u)$ as
\[
\cfdl_{\T}(p,\D):= \max_{\sigma:[0,1]\to A}\E_{(\x,\y)\sim \D}[u(\sigma(p(\x)),\y)] - \E_{(\x,\y)\sim \D}[u(\sigma_\T(p(\x)),\y)].
\]
We define the \emph{Calibration Decision Loss (CDL)} of the predictor $p$ as the supremum of the CFDL over all decision tasks $(A,u)$ where the payoff function $u:A\to [0,1]$ has its range bounded in $[0,1]$:
\[
\cdl(p,\D):= \sup_{\T = (A,u), u:A\to [0,1]}\cfdl_\T(p,\D).
\]
\end{definition}
As we will see when we prove \Cref{thm:cal-opt} in \Cref{sec:cha}, the CDL is zero if and only if the predictor $p$ is perfectly calibrated. If a predictor is not perfectly calibrated but has a small CDL, any decision maker can still trust the predictor as if it were calibrated without losing too much expected payoff. This holds because the CDL is the supremum of the CFDL over \emph{all} payoff-bounded decision tasks. 

We note that in the definition of CDL, decision tasks are restricted to have a bounded payoff function $u:A\to [0,1]$. This restriction is only for the purpose of normalization: multiplying the payoff function by any positive constant changes the corresponding CFDL by the same constant factor, whereas adding a constant to the payoff function does not change the CFDL.
There is no further restriction on the decision tasks beyond bounded payoff functions. In particular, the action set $A$ can have arbitrary (even infinite) size. A small CDL implies that trusting the predictions will incur small payoff loss for \emph{all} such decision tasks.

	A natural question is how the CDL is related to other measures of the calibration error. We will prove that the CDL is quadratically related to the ECE:
	\begin{theorem}
	[\cite{KLST23,cdl}]
	\label{thm:relation}
	    Let $\D$ be a joint distribution over $\X\times \{0,1\}$. For any predictor $p:\X\to [0,1]$, 
	    \begin{equation}
	    \label{eq:relation}
	    \ECE(p,\D)^2 \le \ECE_2(p,\D) ^2\le \cdl(p,\D) \le 2\, \ECE(p,\D) \le 2\, \ECE_2(p,\D).
	    \end{equation}
	\end{theorem}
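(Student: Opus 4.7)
The chain contains four inequalities. The two outermost, $\ECE(p,\D)^2 \le \ECE_2(p,\D)^2$ and $2\ECE(p,\D) \le 2\ECE_2(p,\D)$, both reduce to the monotonicity $\ECE(p,\D) \le \ECE_2(p,\D)$, which is Jensen's inequality applied to $t\mapsto t^2$ and was already noted after the definition of ECE in Section~\ref{sec:ece}. The substantive work is the two middle inequalities, which I handle separately.

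For the upper bound $\cdl(p,\D) \le 2\ECE(p,\D)$, fix any task $\T = (A,u)$ with $u:A\times\{0,1\}\to [0,1]$ and write $q(v) := \E[\y^*\mid p(\x) = v]$. The maximum in the definition of $\cfdl_\T$ is attained at $\sigma^*(v) \in \argmax_a\{q(v) u(a,1) + (1-q(v)) u(a,0)\}$, while $\sigma_\T(v) \in \argmax_a\{v u(a,1) + (1-v) u(a,0)\}$. Set $D_y := u(\sigma^*(v),y) - u(\sigma_\T(v),y) \in [-1,1]$ for $y\in\{0,1\}$. Conditional on $p(\x) = v$, the pointwise payoff gap is $G(v) = q(v) D_1 + (1-q(v)) D_0$, and optimality of $\sigma_\T(v)$ against the prediction $v$ gives $v D_1 + (1-v) D_0 \le 0$. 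Subtracting this non-positive expression from $G(v)$ can only increase it, yielding
\[
G(v) \;\le\; (q(v) - v)(D_1 - D_0) \;\le\; 2|q(v) - v|,
\]
since $|D_1 - D_0| \le 2$. Taking expectation over $\x$ gives $\cfdl_\T(p,\D) \le 2\ECE(p,\D)$, and taking the supremum over $\T$ yields the claim.

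For the lower bound $\ECE_2(p,\D)^2 \le \cdl(p,\D)$, it suffices to exhibit a single task whose CFDL equals $\ECE_2(p,\D)^2$. Take the Brier task with $A = [0,1]$ and $u(a,y) = 1 - (y-a)^2 \in [0,1]$. A direct calculation shows $\sigma_\T(v) = v$ and the maximizing $\sigma^*(v) = q(v)$. Using the bias-variance identity $\E[(\y^* - c)^2 \mid p(\x) = v] = \var(\y^* \mid p(\x) = v) + (q(v) - c)^2$ with $c \in \{v, q(v)\}$, the variance terms cancel when we subtract the two expected payoffs, leaving
\[
\cfdl_\T(p,\D) \;=\; \E\bigl[(q(p(\x)) - p(\x))^2\bigr] \;=\; \ECE_2(p,\D)^2,
\]
which lower-bounds $\cdl(p,\D)$.

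The only non-routine step is the ``add and subtract'' trick in the upper bound. Its subtlety is keeping straight which mapping is optimal against which distribution: $\sigma^*$ is optimal given the true conditional $q(v)$, whereas $\sigma_\T$ is the best response to the (possibly miscalibrated) prediction $v$. The factor $(q(v) - v)$ emerges precisely from combining the two optimality relations, and the bound on $|D_1 - D_0|$ by $2$ is what forces the constant $2$ in the statement; everything else reduces to Jensen and a bias-variance calculation.
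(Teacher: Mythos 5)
Your proof is correct, and the two outermost inequalities (Jensen) and the lower bound $\ECE_2^2 \le \cdl$ (via the Brier task $\T_2$ with $u(a,y) = 1 - (a-y)^2$, best response the identity, then a bias--variance cancellation) coincide with what the paper does. The interesting divergence is in the upper bound $\cdl \le 2\ECE$.

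The paper obtains the upper bound by passing through the Bregman-divergence characterization $\cfdl_\T(p,\D) = \E_\D[D_\varphi(\recal(\x)\|p(\x))]$ (Lemma~\ref{lm:payoff-bregman}), where $\varphi$ is the convex function associated to $\T$ via the proper-scoring-rule characterization, and then invoking the V-shaped divergence theorem of \cite{optimization-scoring-rule} (Theorem~\ref{thm:v-shape}) to replace $D_\varphi$ by $\sup_{\vstar} D_{\varphi_\vstar}$, after which the explicit formula \eqref{eq:v-breg} for V-shaped divergences gives the factor-$2$ bound. Your argument bypasses all of this machinery: you compare the two pointwise-optimal maps $\sigma^*(v)$ (best response to the true conditional $q(v)$) and $\sigma_\T(v)$ (best response to the stated $v$), set $D_y = u(\sigma^*(v),y) - u(\sigma_\T(v),y)$, subtract the nonpositive quantity $vD_1 + (1-v)D_0 \le 0$ from the conditional regret $G(v) = q(v)D_1 + (1-q(v))D_0$, and observe that the difference factors cleanly as $(q(v)-v)(D_1-D_0)$ with $|D_1-D_0|\le 2$. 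This is a self-contained, elementary derivation that never needs the Bregman or scoring-rule apparatus, and it localizes exactly where the constant $2$ comes from (the $[0,1]$-bounded payoff range gives $|D_1-D_0|\le 2$). What the paper's heavier route buys in exchange is structural insight — the Bregman view identifies the worst-case tasks as the V-shaped ones, which is then reused directly to establish the quadratic tightness examples in Section~\ref{sec:proof-examples}, whereas your argument would need a separate computation for those. Both are valid; yours is the shorter path to Theorem~\ref{thm:relation} itself, while the paper's sets up infrastructure it amortizes over the surrounding results.

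One minor point of rigor worth making explicit if you write this out: the claim that $\sigma^*(v) := \sigma_\T(q(v))$ attains the outer max $\max_\sigma \E[u(\sigma(p(\x)),\y)]$ relies on the fact that the optimization over $\sigma:[0,1]\to A$ decomposes over the level sets of $p$, with $\y\mid p(\x)=v$ Bernoulli with parameter $q(v)$ — this is precisely the content of the paper's Lemma~\ref{lm:recal}, which you are implicitly reproving inline.
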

	Moreover, the quadratic relationship between $\cdl$ and $\ECE$ shown in \Cref{thm:relation} is tight (up to lower order terms): for any $\varepsilon\in (0,1/10)$, there exist two pairs $(p_1,\D_1),(p_2,\D_2)$ such that
	\begin{align*}
	& \ECE_2(p_1,\D_1) = \varepsilon,  & \cdl(p_1,\D_1) = 2\varepsilon;\\
	& \ECE(p_2,\D_2) = \varepsilon,  & \cdl(p_2,\D_2) \le \varepsilon^2 + O(\varepsilon^3).
	\end{align*}

We defer the proof of \Cref{thm:relation} to \Cref{sec:relation}. Here we briefly describe the two tight examples. The first example $(p_1,\D_1)$ is very simple. For $(\x,\y)\sim\D_1$, we draw $\y\in \{0,1\}$ from the Bernoulli distribution with parameter $1/2 + \varepsilon$, independent of $\x$. The predictor $p_1$ is the constant predictor $p_1(x) = 1/2$.
In the second example, we draw $\x$ uniformly at random from the interval $[\varepsilon, 1]$ and then draw $\y\in \{0,1\}$ from the Bernoulli distribution with parameter $\x - \varepsilon$. The predictor $p_2$ is the identity function $p_2(x) = x$ for $x\in [\varepsilon,1]$. We will prove the correctness of the examples in \Cref{sec:proof-examples}.

The second example, $(p_2,\D_2)$, demonstrating that the CDL can be significantly smaller than the ECE, is quite instructive. It opens up the possibility that the CDL can be minimized at a faster rate than what is possible for ECE in the online setting. Indeed, the main technical result of \cite{cdl} gives an efficient online CDL minimization algorithm achieving rate $O(\sqrt T \log T)$, overcoming the information-theoretic lower bound $\Omega(T^{0.54389})$ for ECE \cite{sidestep,breaking} (see \Cref{sec:online} for more discussions).

To conclude this subsection, CDL measures the calibration error using the payoff loss of downstream decision makers caused by mis-calibration. In addition to introducing CDL as a meaningful decision-theoretic measure of calibration, the work of \cite{cdl} also shows that CDL allows a significantly better rate than what is possible for ECE in online calibration, which we discuss in \Cref{sec:online}.

In \Cref{sec:cha} we give a simpler yet equivalent definition of the CFDL in \eqref{eq:cfdl-2}, which leads to an interpretation of CDL through the lens of indistinguishability.

\subsection{Characterization of the Maximum Expected Payoff}
\label{sec:cha}
In this section we prove \Cref{thm:cal-opt}. We start by giving a characterization of the maximum expected payoff on the right-hand side of \eqref{eq:cal-opt} for a general predictor $p$ that may or may not be calibrated, which simplifies the definition of CFDL and will be useful in the proof.

Recall the definition of the recalibration $\recal$ of $p$ (\Cref{def:recal}): $\recal$ is obtained by replacing each prediction value $v = p(x)$ with the actual conditional expectation $\E[\y|p(\x) = v]$. Clearly, $\recal$ is perfectly calibrated. If $p$ is perfectly calibrated, then $\recal = p$. 
We have the following characterization of the maximum expected payoff achievable by post-processing $p$:
\begin{lemma}
\label{lm:recal}
Let $\D$ be a joint distribution on $\X\times \{0,1\}$. For any predictor $p:\X\to [0,1]$ and any decision task $\T = (A,u)$, it holds that
\begin{equation}
\label{eq:recal}
\max_{\sigma:[0,1]\to A}\ \E_{(\x,\y)\sim \D}[u(\sigma(p(\x)),\y)] = \E_{(\x,\y)\sim \D}[u(\sigma_\T(\recal(\x)),\y)],
\end{equation}
where $\recal$ is the recalibration of $p$.
\end{lemma}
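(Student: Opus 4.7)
The plan is to apply iterated expectation to reduce the outer maximization over $\sigma$ to a pointwise $\argmax$ on each level set of $p$, and then identify the optimizer with $\sigma_\T(\recal(\x))$ via the defining property of $\recal$.

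First, since $u(\sigma(p(\x)),\y)$ depends on $\x$ only through $p(\x)$, the tower property gives
$$\E_{(\x,\y)\sim\D}[u(\sigma(p(\x)),\y)] = \E_{\x}\bigl[\E[u(\sigma(p(\x)),\y)\mid p(\x)]\bigr].$$
For each value $v\in \Im(p)$, the conditional distribution of $\y$ given $p(\x)=v$ is Bernoulli with parameter $\E[\y\mid p(\x)=v]$, which by definition of the recalibration equals $\recal(x)$ for any $x$ with $p(x)=v$. In particular, $\recal$ factors through $p$, so $\recal(\x)$ is a measurable function of $p(\x)$. Hence the inner conditional expectation coincides with $\E_{\y'\sim\recal(x)}[u(\sigma(v),\y')]$, a quantity that depends on $\sigma$ only through $\sigma(v)\in A$.

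Second, I would optimize $\sigma(v)$ separately for each $v\in \Im(p)$. By the definition of the best-response function $\sigma_\T$ in \eqref{eq:best-response}, this inner maximum is attained at $\sigma(v) = \sigma_\T(\recal(x))$, which is well-defined because $\recal(x)$ is determined by $v$ and because the $\argmax$ in \eqref{eq:best-response} is non-empty by the standing assumption on $\T$. Choosing $\sigma$ this way on $\Im(p)$ (and arbitrarily elsewhere) simultaneously maximizes the outer expectation; plugging back and undoing the conditioning yields $\max_{\sigma}\E[u(\sigma(p(\x)),\y)] = \E[u(\sigma_\T(\recal(\x)),\y)]$, which is \eqref{eq:recal}. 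For the reverse inequality, I simply note that $\sigma = \sigma_\T\circ\recal$ (viewed as a function of $p(\x)$) is an admissible competitor on the left-hand side.

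I do not anticipate a real obstacle: the lemma is essentially the classical Bayes-optimality statement for decisions made after observing the summary statistic $p(\x)$, with $\recal$ playing the role of the induced posterior Bernoulli parameter. The only mild subtlety is the definability check — that $\sigma_\T \circ \recal$ is a valid mapping of the form $\sigma:[0,1]\to A$ — which is immediate since $\recal$ is constant on level sets of $p$.
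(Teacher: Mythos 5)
Your proposal is correct and follows essentially the same route as the paper's proof: decompose by level sets of $p$, note that the conditional distribution of $\y$ on each level set is Bernoulli with parameter $\recal(\x)$ (constant on that level set), and optimize $\sigma$ pointwise via the best-response function. The extra care you take in spelling out the tower-property step and checking that $\recal$ factors through $p$ (so $\sigma_\T\circ\recal$ is an admissible post-processing of $p(\x)$) is exactly the content the paper compresses into ``decomposing by level sets.''
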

\begin{proof} 
The lemma can be proved by considering the level sets $\X_v:= \{x\in \X:p(x) = v\}$ for $v\in [0,1]$. Within each level set, $p$ is a constant function, and the functions $\sigma(p(x))$ formed by all choices of $\sigma:[0,1]\to A$ are all the constant functions on this level set taking value in $A$.
Moreover, for any level set $\X_v$, the conditional distribution of $\y$ given $\x\in \X_v$ is the Bernoulli distribution with parameter $\recal(\x)$, where $\recal(x)$ is also a constant function for $x\in \X_v$.
Decomposing \eqref{eq:recal} by the level sets, the lemma follows from the definition of the best-response function $\sigma_\T$ in \eqref{eq:best-response}.
\end{proof}

We can now rewrite the definition of CFDL (\Cref{def:cdl}) based on \Cref{lm:recal}:
\begin{equation}
\label{eq:cfdl-2}
\cfdl_{\T}(p,\D) = \E_{(\x,\y)\sim \D}[u(\sigma_\T(\recal(\x)),\y)] - \E_{(\x,\y)\sim \D}[u(\sigma_\T(p(\x)),\y)].
\end{equation}
This expression allows us to easily calculate the CFDL for specific decision tasks. For example, consider the task $\T_2 = (A,u)$ where the action space $A$ is the unit interval $A = [0,1]$, and the payoff function is quadratic: 
\[
u(a,y) = 1 - (a - y)^2 \in [0,1], \quad \text{for $a\in [0,1]$ and $y\in \{0,1\}$.}
\]
The corresponding best-response function is the identity: $\sigma_\T(v) = v$. Plugging it in \eqref{eq:cfdl-2}, we obtain an equality between the CFDL and the square of $\ECE_2$:
\begin{align}
\cfdl_{\T_2}(p,\D) & = \E_{(\x,\y)\sim \D}[(p(\x) - \y)^2 - (\recal(\x) - \y)^2] \notag\\
&= \E[p(\x)^2 - \recal(\x)^2 + 2 \y (\recal(\x) - p(\x))]\notag\\
& = \E[p(\x)^2 - \recal(\x)^2 + 2 \recal(\x) (\recal(\x) - p(\x))] \tag{$\E[\y | \recal(\x), p(\x)] = \recal(\x)$}\notag\\
&= \E[(p(\x) - \recal(\x))^2] = \ECE _2(p,\D)^2.\label{eq:quadratic}
\end{align}
We are now ready to prove \Cref{thm:cal-opt}.
\begin{proof}[of \Cref{thm:cal-opt}]
If $p$ is perfectly calibrated, then $p = \recal$, and \eqref{eq:cal-opt} follows immediately from \Cref{lm:recal}.
For the reverse direction, if \eqref{eq:cal-opt} holds for any decision task, then in particular, it holds for the task $\T_2$ above, implying $\cfdl_{\T_2}(p,\D) = 0$. By \eqref{eq:quadratic}, we have $\ECE_2(p,\D) = 0$, so $p$ is perfectly calibrated, as desired. Since the quadratic payoff function of $\T_2$ has a bounded range $[0,1]$, this proof also implies that the CDL of a predictor is zero if and only if the predictor is perfectly calibrated. 
\end{proof}

\subsection{The Bregman Divergence View of CDL}
\label{sec:div}
We show that the CFDL of a predictor $p$ w.r.t.\ any decision task $\T$ can be expressed as a Bregman divergence $D_\varphi(J^*\|J^p)$ between the two joint distributions $J^*$ and $J^p$ (\Cref{thm:bregman}). Our proof uses a classic characterization of \emph{proper scoring rules} \cite{mccarthy,savage,gneiting}. 

We start with the definition of Bregman divergence.

\begin{definition}[Bregman Divergence]
    Let $\varphi:[0,1]\to \R$ be a convex function and let $\nabla \varphi:[0,1]\to \R$ be its subgradient. For any pair of values $\mu^*,\mu\in [0,1]$, their \emph{Bregman divergence} w.r.t.\ $\varphi$ is
    \[
    D_{\varphi}(\mu^*\|\mu) := \varphi(\mu^*) - \varphi(\mu) - \nabla \varphi (\mu) \cdot (\mu^* - \mu).
    \]
    Since $\nabla \varphi(\mu)$ is a subgradient of $\varphi$ at $\mu$, the Bregman divergence is always nonnegative. When $\mu = \mu^*$, the Bregman divergence becomes zero.
\end{definition}

We will interpret the values $\mu^*,\mu\in [0,1]$ in the definition above as the parameters of two Bernoulli distributions.
For example, if we choose $\varphi(\mu)$ to be the negative Shannon entropy of the Bernoulli distribution with parameter $\mu$: 
\[
\varphi(\mu) = \mu\ln \mu - (1 - \mu)\ln (1 - \mu),
\]
then the Bregman divergence becomes the KL divergence between the two Bernoulli distributions parameterized by $\mu^*$ and $\mu$:
\[
D_{\varphi}(\mu^*\|\mu) = \mu^*\ln \frac {\mu^*}\mu + (1 - \mu^*)\ln \frac{1 - \mu^*}{ 1 - \mu}.
\]
The following key theorem makes the connection between Bregman divergences and decision tasks.

\begin{theorem}
\label{thm:proper}
    For any decision task $\T = (A,u)$, there exists a  convex function $\varphi:[0,1]\to \R$ with subgradient $\nabla \varphi:[0,1]\to \R$ such that
    \[
    u(\sigma_\T(v),y) = \varphi(v) + \nabla\varphi(v) \cdot (y - v) \quad \text{for every $v\in [0,1]$ and $y\in \{0,1\}$.}
    \]
\end{theorem}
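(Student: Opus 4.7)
The plan is to define $\varphi$ as the maximum expected payoff function and recognize it as the maximum over a family of affine functions of $v$, so that convexity and the desired identity fall out of the supporting-hyperplane structure.

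Concretely, I would set
\[
\varphi(v) := \E_{\y\sim v}[u(\sigma_\T(v),\y)] = v\cdot u(\sigma_\T(v),1) + (1-v)\cdot u(\sigma_\T(v),0).
\]
By the definition of $\sigma_\T$ in \eqref{eq:best-response}, we also have
\[
\varphi(v) = \max_{a\in A}\bigl[v\cdot u(a,1) + (1-v)\cdot u(a,0)\bigr].
\]
Since $\varphi$ is a pointwise maximum of affine functions of $v$, it is convex on $[0,1]$. For each $v$, set $a_v := \sigma_\T(v)$ and define the candidate subgradient
\[
\nabla\varphi(v) := u(a_v,1) - u(a_v,0).
\]
The affine function $\ell_v(v'):= v'\cdot u(a_v,1) + (1-v')\cdot u(a_v,0)$ satisfies $\ell_v(v) = \varphi(v)$ and $\ell_v(v')\le \varphi(v')$ for all $v'\in [0,1]$ (by the max characterization above), so $\ell_v$ is a supporting affine function of $\varphi$ at $v$, which makes $\nabla\varphi(v)$ a valid subgradient.

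It then remains to verify the stated identity, which is a direct two-line computation: for $y=1$,
\[
\varphi(v) + \nabla\varphi(v)\cdot (1-v) = \ell_v(1) = u(a_v,1),
\]
and for $y=0$,
\[
\varphi(v) + \nabla\varphi(v)\cdot (0-v) = \ell_v(0) = u(a_v,0),
\]
both of which equal $u(\sigma_\T(v),y)$ by the definition of $a_v$.

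The only mild subtlety (and the closest thing to an obstacle) is that $\varphi$ need not be differentiable at points where the $\argmax$ in \eqref{eq:best-response} is not unique; there, different choices of $\sigma_\T(v)$ correspond to different subgradients. This is harmless for the statement: the theorem only requires \emph{some} valid subgradient, and the construction above produces one consistent with the choice of best-response function $\sigma_\T$ that is fixed once and for all. Measurability of $\nabla\varphi$ is a separate but standard point (a convex function on $[0,1]$ is differentiable outside a countable set, and one can always select a measurable subgradient), which I would mention only in passing.
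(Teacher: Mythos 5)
Your proof is correct, and it follows the same route as the paper: the paper observes that $u(\sigma_\T(v),y)$ is a proper scoring rule and cites the classic Savage/McCarthy/Gneiting--Raftery characterization, while you simply unpack that characterization and prove it from scratch via the ``pointwise max of affine functions'' representation of $\varphi$. Your version is self-contained where the paper's is a citation, but the underlying construction of $\varphi$ and its subgradient is the standard one being invoked.
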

To prove the theorem, one should first observe that the function $u(\sigma_\T(v),y)$ is a \emph{proper scoring rule}. 
That is, for any $v,v'\in [0,1]$, we have
\[
\E_{\y\sim v}u(\sigma_\T(v),\y) \ge \E_{\y\sim v}u(\sigma_\T(v'),\y),
\]
which follows from the definition \eqref{eq:best-response} of the best-response function $\sigma_\T$. The theorem then follows from a standard characterization of proper scoring rules \cite{mccarthy,savage,gneiting}.

We can now write the expected payoff achieved by a predictor $p$ using the Bregman divergence between $p$ and its recalibration $\recal$:
\begin{lemma}
\label{lm:payoff-bregman}
    Fix a joint distribution $\D$ of $(\x,\y)\in \X\times \{0,1\}$. Let $p:\X\to [0,1]$ be a predictor and $\recal$ be its recalibration (\Cref{def:recal}). Then for any decision task $\T = (A,u)$ and the corresponding convex function $\varphi$ from \Cref{thm:proper},
    \begin{align}
    \E_{\D}[u(\sigma_\T(p(\x)),\y)] & = \E_{\D}[\varphi(\recal(\x))]- \E_{\D}[D_\varphi(\recal(\x)\|p(\x))], \label{eq:payoff-bregman-1}\\
    \cfdl_{\T}(p,\D) & = \E_{\D}[D_\varphi(\recal(\x)\|p(\x))].\label{eq:payoff-bregman-2}
    \end{align}
\end{lemma}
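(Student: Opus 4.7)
The plan is to derive \eqref{eq:payoff-bregman-1} as the main computation by plugging the pointwise identity from \Cref{thm:proper} into the expected payoff, and then to obtain \eqref{eq:payoff-bregman-2} as an easy consequence using the definition of CFDL from \eqref{eq:cfdl-2}.

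For \eqref{eq:payoff-bregman-1}, I would start by using \Cref{thm:proper} to write, for every $x\in\X$ and $y\in\{0,1\}$,
\[
u(\sigma_\T(p(x)),y) = \varphi(p(x)) + \nabla\varphi(p(x))\cdot (y - p(x)).
\]
Taking expectation over $(\x,\y)\sim\D$ and using the tower property together with the defining property $\E[\y\mid p(\x)] = \recal(\x)$ of the recalibration (noting that $\recal(\x)$ is a function of $p(\x)$), I would replace $\y$ by $\recal(\x)$ in the linear term to obtain
\[
\E_\D[u(\sigma_\T(p(\x)),\y)] = \E_\D\bigl[\varphi(p(\x)) + \nabla\varphi(p(\x))\cdot(\recal(\x) - p(\x))\bigr].
\]
Rearranging the definition of the Bregman divergence gives the pointwise identity
\[
\varphi(p(\x)) + \nabla\varphi(p(\x))\cdot(\recal(\x) - p(\x)) = \varphi(\recal(\x)) - D_\varphi(\recal(\x)\|p(\x)),
\]
and taking expectation yields \eqref{eq:payoff-bregman-1}.

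For \eqref{eq:payoff-bregman-2}, I would invoke the alternative CFDL formula \eqref{eq:cfdl-2}, which reads $\cfdl_\T(p,\D) = \E_\D[u(\sigma_\T(\recal(\x)),\y)] - \E_\D[u(\sigma_\T(p(\x)),\y)]$. Applying \eqref{eq:payoff-bregman-1} with $p$ replaced by $\recal$ (whose recalibration is $\recal$ itself, since $\recal$ is perfectly calibrated), and using that $D_\varphi(v\|v) = 0$, gives $\E_\D[u(\sigma_\T(\recal(\x)),\y)] = \E_\D[\varphi(\recal(\x))]$. Subtracting \eqref{eq:payoff-bregman-1} then yields exactly $\E_\D[D_\varphi(\recal(\x)\|p(\x))]$, proving \eqref{eq:payoff-bregman-2}.

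The only subtle point is the tower-property step in the proof of \eqref{eq:payoff-bregman-1}: one must check that $\recal(\x)$ is measurable with respect to $p(\x)$, which is immediate from \Cref{def:recal} since $\recal$ is obtained from $p$ by post-processing its level sets. Beyond that, the argument is a direct algebraic manipulation of the pointwise identity provided by \Cref{thm:proper} and the definition of the Bregman divergence, so there is no real obstacle.
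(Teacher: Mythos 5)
Your proof is correct and follows essentially the same route as the paper's: plug the proper-scoring-rule identity from \Cref{thm:proper} into the expected payoff, condition on $p(\x)$ to replace $\y$ by $\recal(\x)$, rearrange into Bregman form, and then take the difference with the recalibrated predictor (using $D_\varphi(v\|v)=0$) to get the CFDL identity. The one stylistic difference is that you explicitly invoke \eqref{eq:payoff-bregman-1} with $p$ replaced by $\recal$, whereas the paper simply repeats the parallel computation; both are the same argument.
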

\begin{proof}
    By \Cref{thm:proper},
    \begin{align*}
    \E_\D[u(\sigma_\T(p(\x),\y))] & = \E_\D[\varphi(p(\x)) + \nabla \varphi(p(\x))\cdot (\y - p(\x))]\\
    & = \E_\D[\varphi(p(\x)) + \nabla \varphi(p(\x))\cdot (\hat p(\x) - p(\x))] \tag{because $\E[\y|p(\x)] = \hat p(\x)$}\\
    & = \E_\D[\varphi(\hat p(\x))] - \E_\D[\varphi(\hat p(\x)) - \varphi(p(\x)) - \nabla \varphi(p(\x))\cdot (\hat p(\x) - p(\x))]\\
    & = \E_\D[\varphi(\hat p(\x))] - \E_\D[D_\varphi(\hat p(\x)\| p(\x))].
    \end{align*}
This proves Equation \eqref{eq:payoff-bregman-1}. Similarly,
\[
\E_\D[u(\sigma_\T(\hat p(\x),\y))] = \E_\D[\varphi(\hat p(\x))] - \E_\D[D_\varphi(\hat p(\x)\| \hat p(\x))] = \E_\D[\varphi(\hat p(\x))].
\]
Taking the difference between the two equations above, we have
\[
\cfdl_{\T}(p,\D) = \E_\D[u(\sigma_\T(\hat p(\x),\y))] - \E_\D[u(\sigma_\T(p(\x),\y))] = \E_\D[D_\varphi(\hat p(\x)\| p(\x))].
\]
This proves Equation \eqref{eq:payoff-bregman-2}.
\end{proof}

We now generalize the definition of Bregman divergence to joint distributions, such as $J^*$ and $J^p$, over the domain $[0,1]\times \{0,1\}$.

\begin{definition}[Induced Bregman Divergence between Joint Distributions]
\label{def:induced-bregman}
    Let $\varphi:[0,1]\to \R$ be a convex function and let $\nabla \varphi:[0,1]\to \R$ be its subgradient.
    For any joint distribution $J$ of $(\v,\y)\in [0,1]\times \{0,1\}$, we use $\mu_J(\v) = \E_J[\y|\v]\in [0,1]$ to denote the parameter of the Bernoulli distribution of $\y$ conditioned on $\v$.
    Let $J_1,J_2$ be a pair of joint distributions of $(\v,\y)\in [0,1]\times \{0,1\}$ that share the same marginal distribution of $\v$  and denote this marginal distribution by $M$. 
    We define the \emph{Bregman divergence} between $J_1$ and $J_2$ \emph{induced by} $\varphi$ as\footnote{    One can also view $D_\varphi(J_1\|J_2)$ as the Bregman divergence corresponding to the negative entropy $\Phi(J)$ of any joint distribution $J$ of $(\v,\y)\in [0,1]\times \{0,1\}$ defined by
    $
    \Phi (J):=\E_{(\v,\y)\sim J}[\varphi(\mu_J(\v))].
    $} \[
    D_\varphi(J_1\|J_2) := \E_{\v\sim M}[D_\varphi(\mu_{J_1}(\v)\|\mu_{J_2}(\v))].
    \]
\end{definition}

Combining \Cref{lm:payoff-bregman} and \Cref{def:induced-bregman}, we have a Bregman divergence characterization of the CFDL for any decision task $\T$.
\begin{theorem}[Bregman Divergence View of CFDL]
\label{thm:bregman}
Let $\D$ be a joint distribution over $\X\times \{0,1\}$, and let $p:\X\to [0,1]$ be a predictor. As before, given $(\x,\y^*)\sim \D$, we draw $\y^p$ from the Bernoulli distribution with parameter $p(\x)$, and use $J^*,J^p$ to denote the distributions of $(p(\x),\y^*)$ and $(p(\x),\y^p)$, respectively. Then for any decision task $\T = (A,u)$ and the corresponding convex function $\varphi$ from \Cref{thm:proper},
    $\cfdl_\T(p,\D) = D_{\varphi}(J^*\|J^p)$.
\end{theorem}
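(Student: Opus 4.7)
The plan is to combine Equation \eqref{eq:payoff-bregman-2} from Lemma \ref{lm:payoff-bregman} with Definition \ref{def:induced-bregman}. The former already expresses $\cfdl_\T(p,\D)$ as an expected Bregman divergence between $\recal(\x)$ and $p(\x)$, while the latter expresses $D_\varphi(J^*\|J^p)$ as an expected Bregman divergence between conditional means of $J^*$ and $J^p$ on their common $\v$-marginal. So the whole task reduces to matching these two expectations by identifying the relevant conditional expectations with the recalibration $\recal$.

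First I would invoke \eqref{eq:payoff-bregman-2} to obtain $\cfdl_\T(p,\D) = \E_\D[D_\varphi(\recal(\x)\|p(\x))]$. Since $\recal(\x)$ is by construction a function of $p(\x)$ alone, and so is $p(\x)$ trivially, I can rewrite this expectation as an integral over $v \sim M$, where $M$ is the law of $p(\x)$ under $\D$:
\[
\cfdl_\T(p,\D) = \E_{v \sim M}\bigl[D_\varphi\bigl(\E[\y^*\mid p(\x)=v]\,\big\|\,v\bigr)\bigr].
\]

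Next I would unpack $D_\varphi(J^*\|J^p)$. The two joint distributions $J^*$ and $J^p$ share the same marginal $M$ on the first coordinate because both are obtained by sampling $\x$ from $\D_\X$ and applying $p$, so Definition \ref{def:induced-bregman} applies and gives
\[
D_\varphi(J^*\|J^p) = \E_{v \sim M}\bigl[D_\varphi(\mu_{J^*}(v) \| \mu_{J^p}(v))\bigr].
\]
The key identifications are $\mu_{J^*}(v) = \E[\y^* \mid p(\x) = v]$, which is exactly $\recal$ evaluated at any preimage of $v$, and $\mu_{J^p}(v) = \E[\y^p \mid p(\x) = v] = v$, which holds by the very construction of $\y^p$ as a $\Ber(p(\x))$ draw. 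Substituting these back matches the expression for $\cfdl_\T(p,\D)$ derived in the previous step, giving the theorem.

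There is no substantive obstacle: the proof is essentially definition-chasing, since Lemma \ref{lm:payoff-bregman} has already done the heavy lifting via the proper scoring rule characterization of Theorem \ref{thm:proper}. The only care required is to verify the two prerequisites of Definition \ref{def:induced-bregman} (that $J^*$ and $J^p$ have the same $\v$-marginal) and to correctly read off the two conditional-mean functions $\mu_{J^*}, \mu_{J^p}$, one of which collapses to the identity because $\y^p$ is Bernoulli with parameter $p(\x)$ by construction.
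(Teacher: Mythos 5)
Your proposal is correct and follows essentially the same route as the paper's proof: apply Equation \eqref{eq:payoff-bregman-2} from Lemma \ref{lm:payoff-bregman}, then match the resulting expectation to Definition \ref{def:induced-bregman} by identifying $\mu_{J^*}(v)$ with the recalibration $\recal$ and $\mu_{J^p}(v)$ with $v$. The only difference is cosmetic: the paper records these identifications as displayed equations before chaining them, while you state them inline, but the logic is identical.
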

\begin{proof}
Let $\hat p$ be the recalibration of $p$ (\Cref{def:recal}). By the definitions of $J^*$ and $J^p$, for any $x\in \X$, we have 
\begin{align}
\mu_{J^*}(p(x)) & = \hat p(x), \label{eq:muJstar}\\
\mu_{J^p}(p(x)) & = p(x).\label{eq:muJp}
\end{align}
Let $M$ denote the marginal distribution of $p(\x)$\ where $(\x,\y^*)\sim \D$. By \Cref{lm:payoff-bregman},
\begin{align*}
\cfdl_{\T}(p,\D) & = \E_\D[D_\varphi(\hat p(\x)\| p(\x))]\\
& = \E_{\v\sim M}[D_\varphi(\mu_{J^*}(\v)\| \mu_{J^p}(\v))]\\
& = D_{\varphi}(J^*\|J^p).
\end{align*}
\end{proof}

\subsection{V-shaped Divergences}
In this subsection, we discuss a fundamental result about Bregman divergences (\Cref{thm:v-shape}) that will be used to prove \Cref{thm:relation}.

CDL focuses on decision tasks $\T = (A,u)$ with $[0,1]$-bounded payoff functions $u:A\to [0,1]$. For such tasks, the corresponding convex function $\varphi$ from \Cref{thm:proper} must have bounded subgradients:
\begin{equation}
\label{eq:bounded-subgradient-0}
\nabla \varphi(v) = u(\sigma_\T(v),1) - u(\sigma_\T(v),0)\in [-1,1] \quad \text{for every $v\in [0,1]$.}
\end{equation}
While the convex functions $\varphi$ with bounded subgradients $\nabla \varphi(v)\in [-1,1]$ form a large family, a fundamental result by \cite{optimization-scoring-rule}, which we include as \Cref{thm:v-shape} below, shows that the divergences $D_\varphi$ defined by this family can be captured by extremely simple functions $\varphi$ that are termed \emph{V-shaped} functions.
Specifically, for each $\vstar\in [0,1]$, a V-shaped function $\varphi_\vstar$ is defined as follows:
\[
\varphi_{\vstar}(v) = |v - \vstar| \quad \text{for every $v\in [0,1]$.}
\]
The Bregman divergence $D_{\varphi_\vstar}$ is correspondingly termed a \emph{V-shaped} divergence, and it can be easily computed as follows: for $v_1,v_2\in [0,1]$, we have 
\begin{equation}
\label{eq:v-breg}
D_{\varphi_\vstar}(v_1\|v_2) = \begin{cases}
2|v_1 -v^*| \le 2 |v_1 - v_2|, & \text{if }\vstar \in (v_1,v_2] \text{ or if }\vstar \in (v_2,v_1];\\
0, & \text{otherwise.}
    \end{cases}
\end{equation}
The following theorem gives an upper bound on the expected divergence $D_\varphi$ for a general $\varphi$ with bounded subgradient in terms of V-shaped divergences $D_{\varphi_{\vstar}}$.
\begin{theorem}[\cite{optimization-scoring-rule}]
\label{thm:v-shape}
    Let $\varphi:[0,1]\to \R$ be a convex function whose subgradient is bounded: $\nabla \varphi(v) \in [-1,1]$ for every $v\in [0,1]$. Then for any distribution $\Pi$ of $(v_1,v_2)\in [0,1]$,
    \[
    \E_{(v_1,v_2)\sim \Pi}D_\varphi(v_1,v_2) \le \sup_{\vstar\in [0,1]}\E_{(v_1,v_2)\sim \Pi}D_{\varphi_\vstar }(v_1,v_2).
    \]
\end{theorem}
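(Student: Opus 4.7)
The plan is to represent any convex $\varphi$ with $\nabla\varphi\in [-1,1]$ as a non-negative mixture of the simple generators $(v-\vstar)_+$, so that $D_\varphi$ becomes a mixture of V-shaped divergences $D_{\varphi_\vstar}$, and then pass the supremum outside the mixture.

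First, since $\nabla\varphi:[0,1]\to [-1,1]$ is non-decreasing, it induces a non-negative Stieltjes measure $\mu$ on $[0,1]$ of total mass $\mu([0,1]) \le \nabla\varphi(1) - \nabla\varphi(0^-) \le 2$. One-dimensional convex analysis (integrating $\nabla\varphi$ by parts, or expressing $\varphi$ through its subdifferential) yields the representation
\begin{equation*}
\varphi(v) \;=\; L(v) \;+\; \int_0^1 (v - \vstar)_+\, d\mu(\vstar),
\end{equation*}
where $L$ is affine and $(x)_+:=\max(x,0)$. Because the Bregman-divergence map $\varphi\mapsto D_\varphi$ is linear and annihilates affine functions, this gives
\begin{equation*}
D_\varphi(v_1\|v_2) \;=\; \int_0^1 D_{(\cdot - \vstar)_+}(v_1\|v_2)\, d\mu(\vstar).
\end{equation*}

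Second, I would compute $D_{(\cdot - \vstar)_+}(v_1\|v_2)$ directly by case analysis on whether each of $v_1,v_2$ lies above or below $\vstar$. The four cases collapse to $|v_1-\vstar|$ when $\vstar$ lies strictly between $v_1$ and $v_2$, and $0$ otherwise. Comparing with \eqref{eq:v-breg} gives $D_{\varphi_\vstar}(v_1\|v_2) = 2\,D_{(\cdot-\vstar)_+}(v_1\|v_2)$; this is also transparent from the identity $|v-\vstar| = 2(v-\vstar)_+ - (v-\vstar)$, whose affine piece is irrelevant for Bregman divergence. Substituting into the display above,
\begin{equation*}
D_\varphi(v_1\|v_2) \;=\; \tfrac{1}{2}\int_0^1 D_{\varphi_\vstar}(v_1\|v_2)\, d\mu(\vstar).
\end{equation*}

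Finally, take $\E_\Pi$ of both sides, apply Fubini, and bound by pulling the sup out of the integral:
\begin{equation*}
\E_\Pi[D_\varphi(v_1,v_2)] \;=\; \tfrac{1}{2}\int_0^1 \E_\Pi[D_{\varphi_\vstar}(v_1,v_2)]\, d\mu(\vstar) \;\le\; \tfrac{\mu([0,1])}{2}\, \sup_{\vstar\in[0,1]}\E_\Pi[D_{\varphi_\vstar}(v_1,v_2)],
\end{equation*}
and conclude via $\mu([0,1])\le 2$, which is exactly where the subgradient bound $\nabla\varphi \in [-1,1]$ is consumed. The only mildly delicate step is the integral representation of $\varphi$ (since $\varphi$ need not be $C^2$) together with fixing a consistent subgradient for $\varphi_\vstar$ at $v=\vstar$ (a $\mu$-null issue); once the representation is in hand, the rest is a short chain of substitutions.
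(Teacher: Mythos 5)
The paper states \Cref{thm:v-shape} as an imported result from \cite{optimization-scoring-rule} and does not include a proof, so there is no internal argument to compare against; your job here is to supply a self-contained one, and your proposal succeeds. The decomposition is the right one: any convex $\varphi$ on $[0,1]$ with subgradient in $[-1,1]$ is, modulo an affine term, a non-negative mixture $\varphi(v) = L(v) + \int_0^1 (v-\vstar)_+\,d\mu(\vstar)$ of hinge generators, with the subgradient bound exactly controlling the mixture mass $\mu([0,1]) \le \nabla\varphi(1) - \nabla\varphi(0) \le 2$; linearity of $\varphi \mapsto D_\varphi$, the fact that affine pieces are annihilated, and the identity $|v-\vstar| = 2(v-\vstar)_+ - (v-\vstar)$ then give $D_\varphi(v_1\|v_2) = \frac{1}{2}\int_0^1 D_{\varphi_\vstar}(v_1\|v_2)\,d\mu(\vstar)$, and Tonelli plus the mass bound finishes, with the factor $2$ in the mass bound cancelling the factor $\frac{1}{2}$. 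Your case analysis of $D_{(\cdot-\vstar)_+}$ is correct and matches \eqref{eq:v-breg}. The one thing I would tighten is the characterization of the subgradient issue as ``$\mu$-null'': it is not, since $\mu$ can have atoms (exactly where $\nabla\varphi$ jumps). The right fix is to commit upfront to a measurable subgradient selection, e.g.\ the right derivative, so $\mu$ is the Lebesgue--Stieltjes measure of a right-continuous nondecreasing function, and to use the matching conventions $\nabla(\cdot - \vstar)_+(\vstar) = 1$ and $\nabla\varphi_\vstar(\vstar) = 1$. That is precisely the convention encoded by the half-open intervals $(v_1,v_2]$, $(v_2,v_1]$ in \eqref{eq:v-breg}, and with it the pointwise identity holds everywhere rather than merely $\mu$-a.e. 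This Choquet/Schervish-type decomposition into elementary V-shaped scoring rules is the standard route and is very likely also what the cited reference does.
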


\subsection{Relationship to ECE}
\label{sec:relation}
We prove \Cref{thm:relation}, which demonstrates the quadratic relationship between the CDL and the ECE. The first and last inequalities in \eqref{eq:relation} follow immediately from Jensen's inequality. The second inequality can be proved using the decision task $\T_2$ from \Cref{sec:cha}. Specifically, the payoff function of $\T_2$ has its range bounded in $[0,1]$, so by the definition of CDL and Equation \eqref{eq:quadratic},
\[
\cdl(p,\D) \ge \cfdl_{\T_2}(p,\D) = \ECE_2(p,\D)^2.
\]
Now we prove the third inequality in \eqref{eq:relation}. 
By \Cref{lm:payoff-bregman} and \Cref{thm:v-shape}, for any decision task $\T$ with $[0,1]$ bounded payoffs,
\begin{align*}
\cfdl_\T(p,\D) = \E_\D[D_\varphi(\recal(\x)\| p(\x))] & \le \sup_{\vstar\in [0,1]} \E_\D[D_{\varphi_\vstar}(\recal(\x)\| p(\x))]\\
&\le 2 \E_\D|\recal(\x) - p(\x)| = \ECE(p,\D). \tag{by \eqref{eq:v-breg}}
\end{align*}
This proves $\cdl(p,\D) \le 2 \ECE(p,\D)$, as desired.

\subsection{Tight examples between CDL and ECE}
\label{sec:proof-examples}
We prove the correctness of the two examples $(p_1,\D_1),(p_2,\D_2)$ we mentioned after \Cref{thm:relation} that shows the tightness of \Cref{thm:relation}.

In the first example, we have $p_1(x) = 1/2$ and $\hat p_1(x) = 1/2 + \varepsilon$ for any $x$, so it is clear that $\ECE_2(p_1,\D_1) = \varepsilon$. To prove $\cdl(p_1,\D_1) \ge 2\varepsilon$, consider the task $\T_1 = (A,u)$ with two actions: $A = \{0,1\}$. The payoff function $u$ is defined such that $u(a,y)= 1$ if $a = y$, and $u(a,y) = 0$ otherwise. The best-response function is $\sigma_\T(v) = 0$ if $v\le 1/2$, and $\sigma_T(v) = 1$ otherwise. We have
\begin{align*}
\E[u(\sigma_{\T_1}(p_1(\x)),\y)] = \E[u(0,\y)] = \Pr[\y = 0] = \frac 12 - \varepsilon,\\
\E[u(\sigma_{\T_1}(\hat p_1(\x)),\y)] = \E[u(1,\y)] = \Pr[\y = 1] = \frac 12 + \varepsilon.
\end{align*}
Taking the difference between the two expected payoffs, we get $\cdl(p_1,\D_1) \ge \cfdl_{\T_1}(p_1,\D_1) = 2\varepsilon$.

In the second example, we have $p_2(x) = x$ and $\hat p_2(x) = x - \varepsilon$, so it is clear that $\ECE(p_2,\D_2) = \varepsilon$. Now we prove 
\begin{equation}
\label{eq:example-2}
\cdl(p_2,\D_2) \le \frac{\varepsilon^2}{1 - \varepsilon} = \varepsilon^2 + O(\varepsilon^3).
\end{equation}
Consider any decision task $\T = (A,u)$ with a $[0,1]$-bounded payoff function $u:A\to [0,1]$. 
By \Cref{lm:payoff-bregman} and \Cref{thm:v-shape},
\begin{align}
\cfdl_{\T}(p_2,\D_2) & = \E_{\D_2}[D_\varphi(\hat p_2(\x)\| p_2(\x))]\notag\\
& = \E_{\D_2}[D_\varphi(\x - \varepsilon\| \x)]\notag\\
& \le \sup_{\vstar\in [0,1]}\E_{\D_2}[D_{\varphi_\vstar}(\x - \varepsilon\| \x)]\notag\\
& = 2\sup_{\vstar\in [0,1]}\E_{\D_2}\Big[(\vstar - (\x - \varepsilon))\mathbb I[\vstar\in (\x - \varepsilon,\x]]\Big]\notag \tag{by \eqref{eq:v-breg}}\\ 
& = \frac 2{1 - \varepsilon}\sup_{\vstar\in [0,1]}\int_\varepsilon ^1(\vstar - (x - \varepsilon))\ind[\vstar\in (x - \varepsilon,x]]\dd x\notag\\ 
& \le \frac{2}{1 - \varepsilon}\sup_{\vstar\in [0,1]}\int_{v^*}^{v^* + \varepsilon}(\vstar - (x - \varepsilon)) \dd x\notag\\ 
& = \frac{\varepsilon^2}{1 - \varepsilon}.
\end{align}
Since this upper bound on the $\cfdl$ holds for any decision task $\T$ with a $[0,1]$-bounded payoff function, it implies \eqref{eq:example-2}, as desired.

\subsection{Further Work}
As we discuss in this section, the defining property of CDL is that it provides a meaningful guarantee on the swap regret incurred by downstream decision makers who trust the predictions. However, CDL is undesirable in other aspects: like ECE, it is discontinuous and requires high sample complexity to estimate. Recent work of \cite{test-action} introduces the notion of \emph{cutoff calibration error} to address the sample complexity issue while maintaining a restricted form of the decision-theoretic guarantee of CDL (e.g.\ they consider the regret relative to \emph{monotone} post-processings of the predictions). This notion of cutoff calibration is essentially identical to the notion of \emph{proper calibration} from \cite{optimal-omni}, who give an algorithm achieving $\widetilde O(\sqrt T)$ error rate for proper calibration in the online setting (see \Cref{sec:online} for the setting). 
The works of \cite{when-does,BlasiokN24,smooth-decision} show that  low smooth calibration error also gives certain decision-theoretic guarantees. In particular, these works  show that it  implies low regret for certain forms of Lipschitz post-processings or for decision makers who make randomized responses (e.g.\ by adding noise to the predictions), though this implication often comes with a quantitative loss (e.g.\ smooth calibration error being at most $\varepsilon$ only implies an  $O(\sqrt \varepsilon)$ regret).

\section{Online Calibration}

\label{sec:online}
We have discussed a variety of ways to quantify the calibration error of a given predictor. In this section, we turn to the algorithmic question of \emph{constructing} a predictor with low calibration error.
This question, when naively formulated, admits a trivial and unenlightening solution: one can simply construct a constant predictor that assigns (an approximation of) the overall average $\E[y]$ to every individual $x$. This is a well-calibrated predictor according to every calibration measure we have discussed. Thus, for the algorithmic question to be insightful, it is essential to formulate it in such a way that reaches beyond the trivial solution.
The seminal work of Foster and Vohra \cite{FosterV98} introduced one such interesting question that turned into an active area of research with exciting recent progress: calibration in \emph{online} prediction.
We will first describe the problem setting and then briefly survey some key results in the literature.

The online prediction problem has $T$ rounds indexed by $t \in [T]$. In round $t$, our algorithm makes a prediction $p_t\in [0,1]$, and nature reveals an outcome $y_t\in \{0,1\}$. For example, we can interpret the problem as predicting the chance of rain each day for $T$ days, where $p_t$ is the prediction we make on day $t$, and $y_t = 1$ if day $t$ is rainy. Since the rounds are ordered chronologically, we allow our algorithm to choose $p_t$ as a function of the history $H_{t-1} = (p_1,\ldots,p_{t-1},y_1,\ldots,y_{t-1})$, and similarly, $y_t$ can depend on the history $H_{t-1}$ as well.

To evaluate the calibration error of the prediction sequence $p_{1,\ldots,T}:= (p_1,\ldots,p_T)$ w.r.t.\ the outcome sequence $y_{1,\ldots,T}:=(y_1,\ldots,y_T)$,  we consider the predictor $p:\{1,\ldots,T\}\to [0,1]$ that assigns prediction $p(t) := p_t$ to each time step $t = 1,\ldots,T$.
Viewing each time step as an individual, we let $\D$ be the uniform distribution over the individual-outcome pairs $(t,y_t)$ for $t = 1,\ldots,T$.
By slight abuse of notation, we  can transform any calibration measure $\CAL$ for $(p,\D)$ into a calibration measure $\CAL$ for $(p_{1,\ldots,T}, y_{1,\ldots,T})$ as follows:
\[
\CAL(p_{1,\ldots,T}, y_{1,\ldots,T}) := T\, \CAL(p,\D).
\]

Once a calibration measure $\CAL$ is chosen, our goal is to design a prediction algorithm that guarantees a small (e.g.\ sub-linear, i.e., $o(T)$) calibration error according to $\CAL$, regardless of how the outcomes $y_t$ are generated. We wish to design a prediction algorithm $P$ that specifies how $p_t$ should be chosen as a function of the history $H_{t-1}$ for every round $t$. We want the calibration error to be small regardless of nature's strategy $Y$, which specifies how $y_t$ should be chosen as a function of $H_{t-1}$ for every round $t$. That is, we want to solve the following optimization problem:
\[
\minimize_{P}\max_Y \CAL(p_{1,\ldots,T}, y_{1,\ldots,T}), \text{ where $p_{1,\ldots,T}, y_{1,\ldots,T}$ is generated by $P$ and $Y$.}
\]

For some calibration measures (e.g.\ ECE and CDL), it is necessary to use randomized prediction algorithms to achieve sub-linear rates. Such an algorithm constructs a distribution $\P$ over prediction strategies $P$ to solve the following problem:
\[
\minimize_{\P}\max_Y \E_{P\sim \P}[\CAL(p_{1,\ldots,T}, y_{1,\ldots,T})].
\]

Here is why randomized predictions are necessary for achieving sub-linear rates for ECE or CDL. For every deterministic prediction algorithm $P$, nature can infer the prediction $p_t$ based on the history $H_{t - 1}$, and can then choose $y_t = 1$ if and only if $p_t < 1/2$, incurring an $\Omega(T)$ rate for ECE and CDL.

In Table~\ref{table:rates}, we summarize the current best upper and lower bounds on the optimal online calibration rates for a few calibration error measures we discussed earlier, which is an active topic for recent research. Notably, the only calibration measure in this table that does not allow an $\widetilde O (\sqrt T)$ rate is ECE.

There are substantial gaps between the best upper and lower bounds for many calibration measures in this table, making it a natural question to close or reduce these gaps. Very recently, the works of \cite{peng} and \cite{Fishelson} have achieved significant progress on online calibration algorithms in the \emph{multi-class} setting, opening up another exciting area for future research.

\begin{table}
\begin{center}

\begin{tabular}{lll} 
 \toprule
\textbf{Calibration Error} & \textbf{Rate Upper Bound} & \textbf{Rate Lower Bound} \\ [0.5ex] 
 \toprule
 \makecell[l]{Expected Calibration Error \\ (ECE)} & \makecell[l]{$O(T^{2/3})$  \cite{FosterV98} \\ \\ $O(T^{2/3 - \varepsilon})$ \cite{breaking}}  & \makecell[l]{$\Omega(T^{1/2})$ {[Folklore]} \\ $\Omega(T^{0.528})$ \cite{sidestep} \\ $\Omega(T^{0.54389})$  \cite{breaking}} \\ 
\midrule
\makecell[l]{Distance to Calibration \\ \cite{BlasiokGHN23}} & \makecell[l]{$O(T^{1/2})$  \cite{dist-online} \cite{elementary}} & \makecell[l]{$\Omega(T^{1/3})$ \cite{dist-online}} \\ 
 \midrule
 \makecell[l]{Smooth Calibration Error \\ \cite{kakadeF08}} & \makecell[l]{$O(T^{1/2})$ \cite{dist-online}  \cite{elementary}} &  \makecell[l]{$\Omega(T^{1/3})$ \cite{dist-online}}\\
 \midrule
 \makecell[l]{Calibration Decision Loss  \\ (CDL) \cite{cdl}} & $O(T^{1/2}\log T)$ \cite{cdl} & $\Omega(T^{1/2})$ \cite{cdl}\\
 \bottomrule
\end{tabular}
\end{center}
\caption{Upper and lower bounds on the optimal rates for online calibration}
\label{table:rates}
\end{table}

\section{The Distance to Calibration}
\label{sec:dtc}

At this point, we seem to have a Cambrian explosion of approximate calibration measures, each of which has their own desirable properties, and will give different calibration errors for a predictor. How should we compare these different measures, and decide which to use? Is there any notion of ground truth, that would guide us in this choice? In this section, we present one possible answer to this question via the notion of the distance to calibration \cite{BlasiokGHN23}.  We show that the smooth calibration error gives us the best approximation to this ground-truth measure in an information-theoretic sense. 

\eat{
\ilcomment{A bit of an awkward transition}
In Section \ref{sec:cal-error}, we discussed an indistinguishability-based measure of calibration: weighted calibration error. The flexibility in the choice of the weight family $W$ allows it to express a broad class of calibration measures. This flexibility and expressivity demonstrates the advantage of the indistinguishability perspective, but it also leads to a Cambrian explosion of calibration measures. Our challenge now becomes how to choose a specific calibration measure from this broad class. In this section, we posit a candidate \emph{ground-truth} measure of calibration: the \emph{distance to calibration}. 
}

Recall that we defined $\D^*$ to be the joint distribution of $\x, \y^*$, whereas $J^*$ denotes the joint distribution $(p(\x), \y^*)$. 

\begin{definition}
[Distance to calibration \cite{BlasiokGHN23}]
Given a distribution $\D^*$,  define
$\Cal(\D^*)$ to be the set of predictors $q:\X \to [0,1]$ such that $q$ is perfectly calibrated under $\D^*$. Define the \emph{true distance to calibration} of the predictor $p$ as
\[ \dCE(p, \D^*) = \min_{q \in \Cal(D^*)} d(p, q).\]
\end{definition}

This definition formalizes the intuition that a predictor which can be made perfectly calibrated by a small change to its predictions is close to being calibrated. 
A desirable property that follows immediately from this definition is that the distance to calibration is continuous (unlike ECE). In fact, $\dCE$ is Lipschitz continuous: if we chance our predictor $p$ to a different predictor $p'$ that is $\varepsilon$-close to $p$ ($|d(p,p') \le \varepsilon|$), the distance to calibration can only change by at most $\varepsilon$ ($|\dCE(p,\D^*) - \dCE(p',\D^*)|\le \varepsilon$). This continuity property can be easily proved using the triangle inequality for the metric $d$.

Despite its intuitiveness and continuity, $\dCE$ differs from the other notions of calibration we have seen so far in a crucial way: it depends on the feature space $\X$ (at least, syntactically). This dependence comes about because both the set $\Cal(\D^*)$ of perfectly calibrated predictors and the distance metric $d$ depend on $\X$. The definition of $\dCE$ does not give any hints about how one might go about computing or approximating it.

It is natural to ask to what extent $\dCE$ actually depends on the space $\X$, and if it can be approximated by a calibration measure which is independent of $\X$. This leads us to two new definitions. 

\begin{definition}
[\cite{BlasiokGHN23}]
    The \emph{upper distance to calibration} $\uCE(J^*)$ is the maximum of $\dCE(p', \D')$ over all spaces $\X'$, distributions $\D'$ on $\X' \times \zo$ and predictors $p':\X' \to [0,1]$ such that the distribution  $J' = (p'(\x'), \y')$ is identical to the distribution $J^* = (\p(\x), \y^*)$. The \emph{lower distance to calibration} $\lCE$ is defined analogously, replacing the maximum by minimum.
\end{definition}

By their definition, both $\lCE$ and $\uCE$ achieve the goal of only depending on $J^*$ and not $\D^*$. It also follows that 
\[ \lCE(J^*) \leq \dCE(p, \D^*) \leq \uCE(J^*). \]
This leads to two questions:
\begin{enumerate}
\item The definitions of $\lCE$ and $\uCE$ seem rather cumbersome at first, since they involve optimizing over a possibly infinite collection of feature spaces and predictors. Are there more tractable characterizations of these notions, ideally ones that will let us estimate them efficiently?
\item How far apart are $\lCE$ and $\uCE$? An ideal situation would be that they are always equal, or at most a constant factor apart. If so, either of them could serve as a good approximation for $\dCE$, assuming we find efficient ways to compute them. 
\end{enumerate}

In the following subsection, we will show that the largest gap between the upper and lower distances is quadratic ($\uCE(J^*) \le 4\sqrt{\lCE(J^*)}$), and that the smooth calibration error gives a constant-factor approximation to the lower distance to calibration. Together, these results let us efficiently approximate the distance to calibration using smooth calibration error, as  in the work of \cite{hu2024testing}.

\subsection{Characterizing and Relating the Upper and Lower Distances to Calibration}

In this subsection, we answer the two questions above. Specifically, we give simple characterizations for the upper and lower distances in Theorems~\ref{thm:char-upper} and \ref{thm:char-lower}. We show that the two distances are at most quadratically apart in \Cref{thm:int-upper}.

We first give a simpler characterization of the upper distance. 
We begin with some definitions needed to state the characterization.
\begin{definition}[Calibrated post-processing]
 Define the set $K(J^*)$ to be the set of post-processing functions that, when applied to $p$, give a perfectly calibrated predictor. Formally, $ 
    K(J^*) = \{\kappa:[0,1] \to [0,1] \ s.t. \ (\kappa(p(\x)), y^*) \ \text{is perfectly calibrated.} \}$
\end{definition}

We observe that the set $K(J^*)$ is non-empty, since the constant predictor which predicts $\E[y^*]$ is calibrated, and this corresponds to the constant function $\kappa^{\textsf{av}}(v) = \E[\y^*]$ for all $v$. A more interesting post-processing is $\kappa^{\textsf{recal}}(v) = \E[\y^*|v]$, and we call the post-processed predictor $\recal(\x):= \kappa^{\textsf{recal}}( p(\x))$ the \emph{recalibration} of $p$: this predictor keeps the same level sets as $p$, and changes the predictions to be calibrated. 

\begin{definition}[Recalibration]
\label{def:recal}
Fix a distribution $\D$ of $(\x,\y)\in \X\times \{0,1\}$.
We define the \emph{recalibration} of a predictor $p:\X\to [0,1]$ to be another predictor, denoted by $\recal:\X\to [0,1]$, where $\recal(\x):= \E_\D[\y|p(\x)]$.  
\end{definition}

\begin{lemma}
\label{exercise:ece}
    It holds that $\ECE(p, \D^*)  = d(p ,\recal) = d(p,\kappa^{\textsf{recal}}\circ p)$, where $\circ$ denotes function composition.
\end{lemma}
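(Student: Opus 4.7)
The plan is that this lemma is essentially unpacking definitions, so the proof will be short and proceed in two stages corresponding to the two equalities.

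First I would establish the second equality $d(p,\recal) = d(p,\kappa^{\textsf{recal}}\circ p)$. By the definition of $\kappa^{\textsf{recal}}$ and the definition of $\recal$ (\Cref{def:recal}), for every $\x$ we have
\[ (\kappa^{\textsf{recal}}\circ p)(\x) = \kappa^{\textsf{recal}}(p(\x)) = \E_{\D^*}[\y^*\mid p(\x)] = \recal(\x). \]
So $\kappa^{\textsf{recal}}\circ p$ and $\recal$ are the same predictor as functions $\X \to [0,1]$, and thus their $d$-distances to $p$ agree trivially.

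Next I would establish the first equality $\ECE(p,\D^*) = d(p,\recal)$. Expanding $d$ using its definition and then substituting $\recal(\x) = \E_{\D^*}[\y^*\mid p(\x)]$ gives
\[ d(p,\recal) = \E_{\D^*}\bigl[|p(\x) - \recal(\x)|\bigr] = \E_{\D^*}\bigl[\bigl|\E_{\D^*}[\y^*\mid p(\x)] - p(\x)\bigr|\bigr] = \ECE(p,\D^*), \]
where the last step is just the definition of $\ECE$.

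There is no real obstacle here; the lemma is simply a dictionary between three equivalent ways of writing the same quantity. The only thing to be careful about is noting that $\kappa^{\textsf{recal}}$ is defined on $\Im(p) \subseteq [0,1]$ by $\kappa^{\textsf{recal}}(v) = \E[\y^* \mid p(\x)=v]$, so that $\kappa^{\textsf{recal}}(p(\x))$ really does equal $\E[\y^*\mid p(\x)]$ almost surely under $\D^*$.
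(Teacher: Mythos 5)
Your proof is correct, and since the paper states this lemma without proof (the label \texttt{exercise:ece} suggests it is left as an exercise), your definitional unpacking is exactly the intended argument: identify $\kappa^{\textsf{recal}}\circ p$ with $\recal$ pointwise, then observe that $d(p,\recal)$ literally expands to the defining formula for $\ECE$. Nothing to add.
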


In general, the set $K(J^*)$ could be much richer and possibly induce {\em closer} calibrated predictors. In particular, there often exist post-processings $\kappa\in K(J^*)$ such that $d(p,\kappa\circ p)$ is much smaller than $ d(p,\kappa^{\textsf{recal}}\circ p) = \ECE(p,\D^*)$. For the two point distribution $\D_2$ considered before, we have seen that $\ECE(p, \D_2) = 1/2 - \eps$ whereas it follows that $\kappa^{av} = 1/2$ and $d(p, 1/2) = \eps$. 

\cite{BlasiokGHN23} give the following characterization of the upper distance.
\begin{theorem}
\label{thm:char-upper}
\cite{BlasiokGHN23}
    We have
    \[ \uCE(J^*) = \min_{\kappa \in K(J^*)} d(p,\kappa\circ p) = \min_{\kappa \in K(J^*)}\E_{\x}|\kappa(p(\x)) - p(\x)|. \]
\end{theorem}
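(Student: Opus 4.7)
The plan is to prove the two inequalities $\uCE(J^*) \le R(J^*)$ and $\uCE(J^*) \ge R(J^*)$ separately, where $R(J^*) := \min_{\kappa \in K(J^*)} \E_\x|\kappa(p(\x)) - p(\x)|$. A preliminary observation is that the expression defining $R(J^*)$ truly depends only on $J^*$: for any realization $(\X,\D,p)$ of $J^*$, calibration of $\kappa\circ p$ depends only on the joint law $(\kappa(p(\x)),\y^*)$, which is a pushforward of $J^*$; and $\E_\x|\kappa(p(\x)) - p(\x)|$ is an expectation against the marginal of $p(\x)$, which is also determined by $J^*$. So $R(J^*)$ is well-defined.

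For the upper bound, I would fix an arbitrary realization $(\X',\D',p')$ with $J' = J^*$ and any $\kappa \in K(J^*)$, and set $q := \kappa \circ p'$. By the membership $\kappa \in K(J^*)$, the joint $(q(\x'),\y')$ equals the pushforward of $J^*$ by $(\kappa,\text{id})$, which is perfectly calibrated by definition of $K(J^*)$, so $q \in \Cal(\D')$. Therefore
\[
\dCE(p',\D') \le d(p',q) = \E_{\x'}|\kappa(p'(\x')) - p'(\x')| = R_\kappa(J^*),
\]
where the right-hand side depends only on $J^*$. Minimizing over $\kappa$ and then taking the supremum over realizations yields $\uCE(J^*) \le R(J^*)$.

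For the matching lower bound, I would exhibit a single realization attaining $R(J^*)$, namely the canonical one: take $\X' = [0,1]$, let $\D'$ be the joint distribution $J^*$ viewed as a distribution on $[0,1]\times\{0,1\}$, and let $p'$ be the identity. Then $J' = J^*$ trivially. The key observation is that on this minimal feature space, \emph{every} predictor $q:[0,1]\to[0,1]$ can be written as $q = q \circ p'$, so calibrated predictors coincide with post-processings: $q\in \Cal(\D')$ iff $(q(\v),\y^*)$ is perfectly calibrated under $J^*$, iff $q \in K(J^*)$. Consequently
\[
\dCE(p',\D') = \min_{q\in K(J^*)} \E_\v|q(\v) - \v| = R(J^*),
\]
which forces $\uCE(J^*) \ge R(J^*)$ and closes the argument.

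The main conceptual obstacle is spotting the canonical realization. Arbitrary feature spaces $\X$ can admit calibrated predictors that are \emph{not} post-processings of $p$ (because the level sets of $p$ can be broken up by using information in $\x$ that is not captured by $p(\x)$), which is precisely why $\dCE(p,\D)$ can be strictly smaller than $R(J^*)$ in general. Collapsing $\X$ down to $[0,1]$ and letting $p'$ be the identity destroys exactly this extra information, so that calibrated predictors are forced to be post-processings, which is what makes this realization extremize $\dCE$ over the class of realizations of $J^*$. The verification that $R(J^*)$ is independent of the realization, noted at the start, is routine but essential for the two bounds to refer to the same quantity.
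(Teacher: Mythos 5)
Your proposal is correct and follows essentially the same route as the paper's sketch: the upper bound comes from observing that any $\kappa \in K(J^*)$ produces a calibrated predictor on every realization, and the lower bound comes from the canonical realization on $\X' = [0,1]$ with $p' = \mathrm{id}$ where calibrated predictors are forced to be post-processings. Your write-up is a clean, more detailed version of the argument the paper only sketches.
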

This theorem tells us that the upper distance of a given predictor $p$ is exactly its distance to the closest perfectly calibrated predictor that can be obtained by applying a post-processing $\kappa$ to $p$.

Let us sketch the proof idea. $K(J^*)$ is the set of relabelings of the level sets of $p$ which result in a calibrated predictor. For any space $X'$, distribution $D'$ and predictor $p'$ where $J' = J^*$, applying the post-processing function $\kappa \in J^*$ results in a perfectly calibrated predictor $\kappa(p')$ on $\X'$. Hence the distance from such predictors is always an upper bound on $\uCE$. For the space $X''$ where each level set is a single point, these are the only calibrated predictors, so the bound is tight.

We now turn to the lower distance. The good news is that the characterization is in terms of a calibration measure that we have encountered previously: the smooth calibration error $\smCE(p, \D^*)$. The proof however is more involved, we refer the reader to \cite{BlasiokGHN23,BlasiokN24}. 

\begin{theorem}
[\cite{BlasiokGHN23}]
\label{thm:char-lower}
We have
\[ \smCE(p, \D^*)/2 \leq \lCE(J^*) \leq 2 \smCE(p, \D^*)\]
\end{theorem}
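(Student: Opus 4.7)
The proof has two directions; the lower bound is a direct computation while the upper bound is the technical one.

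\emph{Lower bound $\smCE(p,\D^*)/2 \le \lCE(J^*)$.} Fix any realization $(\X',\D',p')$ of $J^*$ and any perfectly calibrated $q:\X'\to[0,1]$. Since $\smCE$ depends only on the joint distribution of $(p'(\x'),y^*)$, we have $\smCE(p',\D') = \smCE(p,\D^*)$, so it suffices to show $\smCE(p',\D') \le 2\,d(p',q)$. For any $w \in L$, telescope:
\begin{align*}
\E[w(p'(\x'))(y^* - p'(\x'))]
&= \E[w(q(\x'))(y^* - q(\x'))] \\
&\quad {}+ \E[(w(p'(\x')) - w(q(\x')))(y^* - q(\x'))] \\
&\quad {}+ \E[w(p'(\x'))(q(\x') - p'(\x'))].
\end{align*}
The first term vanishes because $q$ is calibrated (so $\E[y^* - q(\x') \mid q(\x')] = 0$); the second is at most $\E|p'(\x') - q(\x')|$ in absolute value, using the $1$-Lipschitz continuity of $w$ together with $|y^* - q(\x')| \le 1$; the third is at most $\E|p'(\x') - q(\x')|$ using $|w| \le 1$. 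Maximizing over $w$ and then taking the infimum over $q$ and realizations yields the bound.

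\emph{Upper bound $\lCE(J^*) \le 2\,\smCE(p,\D^*)$.} The plan is to exhibit an explicit realization together with a calibrated predictor within $\ell_1$-distance $2\,\smCE$ of $p'$. The starting point is \Cref{lem:smce-em}, which gives $\EMD(J^*, J^p) \le 2\,\smCE(p,\D^*)$, combined with the observation that $J^p$ is \emph{canonically} calibrated (the identity on $[0,1]$ is calibrated under $J^p$). The proposed construction takes $\X' = [0,1]^2$, draws $((V^*,Y^*),(V^\mu,Y^\mu))$ from an optimal $\ell_1$-coupling between $J^*$ and a canonically calibrated distribution $\mu$ subject to $Y^* = Y^\mu$ almost surely, and sets $\x' = (V^*,V^\mu)$, $y^* = Y^*$, $p'(\x') = V^*$, $q(\x') = V^\mu$. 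Then $(p'(\x'),y^*) \sim J^*$ automatically; $q$ is calibrated under $\D'$ since $\E[y^* \mid q(\x') = v] = \E_\mu[Y^\mu \mid V^\mu = v] = v$ by canonical calibration of $\mu$ combined with $Y^* = Y^\mu$; and $d(p',q) = \E|V^* - V^\mu|$ equals exactly the coupling cost.

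The main obstacle is that a $Y$-identifying coupling requires $\mu$ to share the $Y$-marginal of $J^*$, while the most natural candidate $\mu = J^p$ has $Y$-marginal $\E[p(\x)]$, which generally differs from $\E[y^*]$. The global bias $|\E[y^* - p(\x)]|$ is itself bounded by $\smCE(p,\D^*)$ (using $w \equiv 1 \in L$), so one can either perturb $p$ to eliminate the bias before invoking \Cref{lem:smce-em}, or, more cleanly, formulate the minimum-cost $Y$-preserving transport from $J^*$ to the set of canonically calibrated distributions directly as a linear program and take its Kantorovich-Rubinstein-style dual, showing that this dual objective matches the $\smCE$ optimization up to the factor of $2$. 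This more refined LP-duality approach is the route followed in \cite{BlasiokGHN23,BlasiokN24}.
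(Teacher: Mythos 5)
The paper does not prove this theorem; it explicitly defers to \cite{BlasiokGHN23,BlasiokN24}, so there is no in-paper argument to compare against. Your lower bound $\smCE(p,\D^*)/2 \le \lCE(J^*)$ is correct and complete, and it is essentially the $2$-Lipschitz continuity lemma for $\CE_W$ proved earlier in the paper, applied with $p_1 = p'$, $p_2 = q$ calibrated (so $\CE_L(q,\D') = 0$), then minimized over $q$ and over realizations. Your telescoping identity is the same decomposition as in that lemma's proof, plus the observation that the pure-$q$ term vanishes by calibration.

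The upper bound $\lCE(J^*) \le 2\,\smCE(p,\D^*)$ is only a sketch, and the gap is real. Your construction needs a \emph{label-preserving} coupling ($Y^* = Y^\mu$ a.s.) from $J^*$ to a canonically calibrated distribution $\mu$, which requires matching $Y$-marginals; this fails for the natural candidate $\mu = J^p$ whenever $\E[\y^*] \ne \E[p(\x)]$. In the unbiased case your sketch does close: any coupling under the $\ell_1$ metric on $[0,1]\times\{0,1\}$ can be re-routed to a label-preserving one without increasing cost (swapping a mixed pair $(v_1,1)\to(w_1,0)$, $(v_2,0)\to(w_2,1)$ to $(v_1,1)\to(w_2,1)$, $(v_2,0)\to(w_1,0)$ never increases the transport cost), so the label-preserving transport cost to $J^p$ equals $\EMD(J^*,J^p) \le 2\,\smCE(p,\D^*)$ by \Cref{lem:smce-em}, exactly the claimed bound. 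In the biased case, neither of your proposed repairs is carried out: pre-correcting the bias is feasible (since $|\E[\y^*]-\E[p(\x)]| \le \smCE(p,\D^*)$ via $w\equiv 1$) but a naive accounting appears to degrade the factor-$2$ constant, while the LP-duality route that yields the tight constant --- characterizing the minimal label-preserving transport cost to the set of canonically calibrated distributions and dualizing --- is cited but not reproduced. As written, the upper bound is a correct high-level outline of the \cite{BlasiokGHN23} approach, not a proof.
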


This theorem lets us efficiently approximate the lower distance to calibration, up to a constant factor, by computing  the smooth calibration error. An efficient algorithm for computing the smooth calibration error is given by \cite{hu2024testing}.

We now address the question of how close the upper and lower distances are.
Assume that all we know about the predictor $p$ and distribution $\D^* = (\x, \y^*)$ is the distribution $J^* = (p(\x), \y^*)$. Does this specify $\dCE(p, D^*)$ completely? Or is there still some uncertainty about how far the closest calibrated predictor is, depending on the space $\X$? The answer (perhaps surprisingly) is that there is quadratic uncertainty in the distance, given $J^*$. 

\begin{corollary}
\label{cor:indist}
    No calibration measure based on $J^*$ can distinguish between the cases where $\dCE(p, \D^*) \geq \eta$ and $\dCE(p, \D^*) \leq 2\eta^2$. 
\end{corollary}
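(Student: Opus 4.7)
The plan is to exhibit two setups $(\X_1,\D_1^*,p_1)$ and $(\X_2,\D_2^*,p_2)$ that induce the same joint distribution $J^*=(p(\x),\y^*)$ but have $\dCE(p_1,\D_1^*)\ge \eta$ and $\dCE(p_2,\D_2^*)\le 2\eta^2$. Since any calibration measure based only on $J^*$ assigns the same value to both pairs, no such measure can separate the two regimes in the corollary.

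The target distribution $J^*$ has the marginal of $v = p(\x)$ uniform on $\{\tfrac12-\eta,\tfrac12+\eta\}$ and $\y^*\mid v \sim \Ber(\tfrac12)$, so $\y^*$ is independent of $v$ and Bernoulli$(\tfrac12)$. For setup 1, take $\X_1 = \{a,b\}$ with uniform $\D_{\X_1}^*$, set $p_1(a) = \tfrac12-\eta$, $p_1(b) = \tfrac12 + \eta$, and let $\y^*$ be $\Ber(\tfrac12)$ independent of $\x$. This clearly induces $J^*$. Since $\E[\y^*\mid \x] = \tfrac12$ at both atoms, every nonempty level set $S\subseteq\{a,b\}$ of any $q:\X_1\to[0,1]$ satisfies $\E[\y^*\mid \x\in S] = \tfrac12$, and calibration of $q$ forces $q\equiv \tfrac12$. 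Hence $\dCE(p_1,\D_1^*) = d(p_1,\tfrac12) = \eta$.

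For setup 2, instead of constructing it by hand I will invoke \Cref{thm:char-lower}. A direct computation of $\smCE$ for this $J^*$ uses $\E[\y^*-v\mid v = \tfrac12\mp\eta] = \pm\eta$; for any $1$-Lipschitz $l:[0,1]\to[-1,1]$,
\[
\E[l(v)(\y^*-v)] = \tfrac{\eta}{2}\bigl(l(\tfrac12-\eta)-l(\tfrac12+\eta)\bigr),
\]
whose absolute value is at most $\tfrac{\eta}{2}\cdot 2\eta = \eta^2$. Therefore $\smCE(p,\D^*)\le \eta^2$, and \Cref{thm:char-lower} gives $\lCE(J^*)\le 2\,\smCE \le 2\eta^2$. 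By the definition of $\lCE$, there exists some $(\X_2,\D_2^*,p_2)$ with $J_2^* = J^*$ and $\dCE(p_2,\D_2^*)\le 2\eta^2$.

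Since setup 1 and setup 2 share $J^*$, any $J^*$-based measure agrees on them while their distances to calibration sit on opposite sides of the advertised gap, proving the corollary. The only non-mechanical step in the plan is arguing that the sole calibrated predictor on $\X_1$ is the constant $\tfrac12$; this is where the chosen $J^*$ is designed so that the two atomic level sets force the same calibrated value, and it is what rules out exploiting the split of $p_1$ into two distinct predictions. Everything else is a one-line $\smCE$ calculation plus an appeal to the characterization of $\lCE$ via smooth calibration.
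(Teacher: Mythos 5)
Your proof is correct, but it reaches the ``low-distance'' side of the gap by a genuinely different route than the paper. The paper's argument (in Section~\ref{app:example}) is fully self-contained: it writes down an explicit four-point domain $\X = \{00,01,10,11\}$, a predictor $p_1$ that induces this same $J^*$, and a calibrated predictor $p_2$ on the same domain with $d(p_1,p_2) = 2\eps\delta = \Theta(\eta^2)$, so the upper bound on the small-$\dCE$ side is exhibited concretely. You instead compute $\smCE(p,\D^*) \le \eta^2$ (a clean one-line calculation using that $\y^*$ is independent of $p(\x)$, with $\smCE$ in fact equal to $\eta^2$ via $l(v)=1/2-v$) and then invoke \Cref{thm:char-lower} to conclude $\lCE(J^*) \le 2\,\smCE \le 2\eta^2$. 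This is shorter, but it leans on \Cref{thm:char-lower}, whose proof the survey explicitly defers to \cite{BlasiokGHN23,BlasiokN24}, so the argument is less elementary and gives you no explicit witness for setup~2. One small care point: $\lCE$ is defined as an infimum over an infinite family of feature spaces, so the step ``by the definition of $\lCE$, there exists $(\X_2,\D_2^*,p_2)$ with $\dCE(p_2,\D_2^*)\le 2\eta^2$'' presumes either attainment or a strict inequality $\lCE(J^*)<2\eta^2$; the paper's explicit construction dodges this since it directly produces a calibrated $p_2$ with $d(p_1,p_2)$ strictly below $2\eta^2$. On the other side of the gap you are actually more careful than the paper, which merely asserts ``it is easy to construct a space'' realizing $\dCE=\delta$; you write out the two-point domain and verify that $q\equiv 1/2$ is its unique calibrated predictor. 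Your $\smCE$ computation is also a nice conceptual bonus: for this $J^*$ one has $\ECE = \eta$ but $\smCE = \eta^2$, which is precisely the quadratic phenomenon the corollary is pinning down.
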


We present an example illustrating Corollary~\ref{cor:indist} in Appendix~\ref{app:example}. Specifically, we construct pairs of predictors and distributions $(p_1, \mD_1^*)$ and $(p_2, \mD_2^*)$ so that $J^*$ is identical in both cases, but $\dCE$ differs by a quadratic factor. It turns out that this quadratic separation is in fact the worst possible. 
\begin{theorem}
[\cite{BlasiokGHN23}]
\label{thm:int-upper}
    We have $\uCE(J^*) \leq 4\sqrt{\lCE(J^*)}$.   
\end{theorem}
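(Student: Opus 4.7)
The plan is to combine the characterizations in \Cref{thm:char-upper} and \Cref{thm:char-lower} with an explicit bucketing construction. Set $\alpha := \smCE(p,\D^*)$; \Cref{thm:char-lower} gives $\alpha \le 2\lCE(J^*)$, so by \Cref{thm:char-upper} it suffices to exhibit a single post-processing $\kappa \in K(J^*)$ satisfying $d(p,\kappa\circ p) \le 2\sqrt{2\alpha}$, which will force $\uCE(J^*) \le 2\sqrt{2\alpha} \le 4\sqrt{\lCE(J^*)}$.

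The construction I will use is uniform bucketing. Fix a bucket width $\delta>0$ (to be optimized at the end), partition $[0,1]$ into intervals $I_1,\dots,I_M$ of width $\delta$ with midpoints $m_k$, and let $\pi_k := \Pr[p(\x)\in I_k]$ and $\mu_k := \E[\y^*\mid p(\x)\in I_k]$. Define $\kappa(v) := \mu_{k(v)}$, where $k(v)$ is the index of the bucket containing $v$; by construction $\kappa\circ p$ is perfectly calibrated, so $\kappa \in K(J^*)$. Using $|p(\x) - m_{k(p(\x))}| \le \delta/2$,
\[
d(p,\kappa\circ p) \;=\; \E\bigl|p(\x)-\mu_{k(p(\x))}\bigr| \;\le\; \tfrac{\delta}{2} + \sum_k \pi_k|\mu_k - m_k|,
\]
and writing $g(v):=\E[\y^*\mid p(\x)=v]$, the identity $\pi_k(\mu_k - m_k) = \E[(g(p(\x))-p(\x))\ind[p(\x)\in I_k]] + \E[(p(\x)-m_k)\ind[p(\x)\in I_k]]$ shows that summing the second piece over $k$ in absolute value contributes at most $\delta/2$. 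So the bound reduces to controlling $\sum_k \bigl|\E[(\y^*-p(\x))\ind[p(\x)\in I_k]]\bigr|$.

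The heart of the proof will be the inequality $\sum_k \bigl|\E[(\y^*-p(\x))\ind[p(\x)\in I_k]]\bigr| \le 2\alpha/\delta$. To prove it I would rewrite the left-hand side as $\E[h(p(\x))(\y^*-p(\x))]$, where $h(v) := \sgn\bigl(\E[(\y^*-p(\x))\ind[p(\x)\in I_{k(v)}]]\bigr) \in \{-1,0,+1\}$ is a step function on the bucket decomposition, and then approximate $h$ by a scaled $1$-Lipschitz function. The natural choice is to let $\tilde h$ be the piecewise-linear interpolation of $h$ between bucket midpoints; then $(\delta/2)\tilde h$ is $1$-Lipschitz and $[-1,1]$-valued, so the definition of $\smCE$ immediately yields $\bigl|\E[\tilde h(p(\x))(\y^*-p(\x))]\bigr|\le 2\alpha/\delta$. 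Combining with the earlier decomposition gives $d(p,\kappa\circ p)\le \delta + 2\alpha/\delta$, and optimizing $\delta = \sqrt{2\alpha}$ balances the two terms to produce $d(p,\kappa\circ p) \le 2\sqrt{2\alpha}$.

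The main obstacle will be controlling the approximation error $\E[(h(p(\x)) - \tilde h(p(\x)))(\y^*-p(\x))]$ in the smoothing step. The step function $h$ can flip sign between every pair of adjacent buckets, and no $1$-Lipschitz $[-1,1]$-valued $\tilde h$ can track such rapid oscillations: the pointwise estimate $|h-\tilde h|\le 1$ on transition regions gives only an $O(1)$ approximation error, which would swamp the $2\alpha/\delta$ main term. I expect the resolution to exploit either (i) the structural fact, visible through \Cref{lem:smce-em}, that the signed measure governing the bucket integrals has small Wasserstein-$1$ norm, so that rapid sign flips of $h$ must come with proportionally small bucket contributions $|\E[(\y^*-p(\x))\ind[p(\x)\in I_k]]|$; or (ii) an adaptive partition in which consecutive buckets with opposite $h$-signs are first merged, so that the effective sign pattern varies only over blocks wide enough to support a genuine $1$-Lipschitz test function. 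The quadratic loss $\sqrt{\alpha}$ arises because $\smCE$ controls the ``low-frequency'' structure of $\y^* - p(\x)$ through Lipschitz tests while bucketing at scale $\delta$ averages out its ``high-frequency'' content, and balancing the two error sources forces $\delta \sim \sqrt{\alpha}$.
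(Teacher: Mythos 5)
Your high-level plan — build a calibrated post-processing $\kappa$ by bucketing, invoke \Cref{thm:char-upper} to get $\uCE(J^*) \le d(p,\kappa\circ p)$, and balance the bucket width against the residual bucketed calibration error — is indeed the skeleton of the paper's argument, which it packages as the ``interval calibration error'' $\intCE_B = \CE_B + \width(B)$. But the step you flag as ``the main obstacle'' is a genuine gap, not a technicality to be patched: the inequality $\sum_k \bigl|\E[(\y^*-p(\x))\ind[p(\x)\in I_k]]\bigr| \le 2\smCE(p,\D^*)/\delta$ is \emph{false} for a fixed uniform partition. Take $p(\x)$ uniform on $\{v_0-\eta, v_0+\eta\}$ with $v_0 = 1/2$ chosen to be a bucket boundary and $2\eta<\delta$, and set $\E[\y^*\mid p=v_0-\eta]=1$, $\E[\y^*\mid p=v_0+\eta]=0$. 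The two signed error masses $\pm(\tfrac14+\tfrac\eta2)$ land in adjacent buckets, so the bucketed sum is roughly $\tfrac12$; yet any $1$-Lipschitz weight $w$ only detects $|w(v_0-\eta)-w(v_0+\eta)|\le 2\eta$ of it, so $\smCE = O(\eta)$. For your optimized choice $\delta\sim\sqrt{\smCE}$ the proposed bound $2\smCE/\delta$ is $O(\sqrt\eta)$, vastly smaller than $\tfrac12$. Neither of your speculated resolutions is worked out to the point of closing this.

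The paper makes two changes. First, it does not route through $\smCE$: it works directly with a perfectly calibrated predictor $q$ on the (essentially optimal) feature space with $d(p,q)=\lCE(J^*)=:\delta$, and compares $p$ to $q$ bucket-by-bucket, using $\CE_B(q)=0$. Second — and this is the fix for the obstacle you identified — the partition is \emph{randomized}: the first interval is $[0,b]$ for $b$ uniform in $[0,\beta]$, the rest have width $\beta$. Then $p(\x)$ and $q(\x)$ fall into different buckets with probability at most $|p(\x)-q(\x)|/\beta$ over the random offset, so the boundary-misalignment contribution to $\E_b[\CE_B(p)]$ is $2d(p,q)/\beta = 2\delta/\beta$ in expectation rather than $O(1)$ in the worst case. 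This gives $\E_b[\CE_B(p)]\le \delta + 2\delta/\beta$, and adding the width $\beta$ and setting $\beta=\sqrt{2\delta}$ yields the claimed $4\sqrt\delta$. Your option (ii) (merging adjacent buckets with opposing signs) points in the right direction but is harder to quantify because merges inflate the width term; the random offset is the clean way to smear out adversarial boundary alignment. If you replace your fixed uniform partition by the randomized one and compare to the $\lCE$-witness $q$ rather than to $\smCE$, the rest of your bookkeeping (the two $\delta/2$ terms from bucket midpoints and the final $\delta$-vs-$\delta/\beta$ balance) goes through and recovers the theorem.
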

We discuss the proof of this theorem in \Cref{sec:relate-appendix} following the original approach of \cite{BlasiokGHN23} via the notion of \emph{interval calibration error}.

\subsection{The inherent uncertainty in distance to calibration}
\label{app:example}
Assume that all we know about the predictor $p$ and distribution $\D^* = (\x, \y^*)$ is the distribution $J^* = (p(\x), \y^*)$. Does this specify $\dCE(p, D^*)$ completely? Or is there still some uncertainty on how far the closest calibrated predictor is, depending on the space $\X$?

We present a simple example showing that there is indeed some uncertainty.  Take $\eps$ to be any value in $(0,1/2)$, and let $\delta = \eps/(1 - 2\eps)$. The distribution $J^*$ is easy to describe: $p(\x)$ takes the values $1/2 + \delta$ and $1/2 - \delta$ each with probability $1/2$, and conditioned on each value of $p(\x)$, $\y^*$ is uniformly distributed in $\zo$.\eat{
\begin{center}
    \begin{tabular}{||c|c|c||} 
    \hline
         $v$ & $\Pr_{J^*}[p(\x) =v]$ & $\E_{J^*}[\y^*|v]$\\ [0.5ex] 
         \hline\hline
         $\fr{2} - \delta$ & $\fr{2}$ & $\fr{2}$  \\
         \hline
         $\fr{2} + \delta$ & $\fr{2}$ & $\fr{2}$  \\
         \hline
    \end{tabular}
\end{center}}

Note that any such $p$ is not perfectly calibrated. But it is $\delta$ far from the constant $1/2$ predictor, which is perfectly calibrated. It is easy to construct a space where this is indeed the closest calibrated predictor, so that  $\dCE(p, \mD^*) = \delta$. 

What is perhaps less obvious is there exist spaces and predictors realizing $J^*$ where the true distance to calibration is much smaller. We describe one such construction. Let $\X = \{ 00, 01, 10, 11\}$. Cosndier the distribution $\D^*$ on pairs $(\x, \y^*) \in \X \times \zo$, and predictors $p_1, p_2:\X \to [0,1]$ given below:
\begin{center}
    \begin{tabular}{||c|c|c|c|c||} 
    \hline
         $x$ & $\Pr_{\D^*}[\x = x]$ & $\E_{\D^*}[\y^*|\x  =x]$ & $p_1(x)$ & $p_2(x)$\\ [0.5ex] 
         \hline\hline
         $00$ & $\fr{2} -\eps$ & $\frac{1}{2} - \delta$ & $\fr{2} - \delta$ & $\fr{2} - \delta$  \\ 
         \hline
         $01$ & $\eps$ & $1$ & $\fr{2} - \delta$ & $\fr{2}$\\
         \hline
         $10$ & $\eps$ & $0$ & $\fr{2} + \delta$ & 
         $\fr{2}$ \\
         \hline
         $11$ & $\fr{2} - \eps$ &  $\frac{1}{2} + \delta$ & $\fr{2} + \delta$ & $\fr{2} + \delta$\\ [1ex]
         \hline
    \end{tabular}
\end{center}
\eat{
    \begin{align*}
    \Pr[\x = 00] &= \frac{1 - \eps}{2}, \ \E[\y^*|\x = 00] = \frac{1}{2} - \frac{\eps}{2(1 - \eps)},\\
    \Pr[\x = 01] &= \frac{\eps}{2}, \  \E[\y^*|\x =01] = 1,\\
    \Pr[\x = 10] &= \frac{\eps}{2}, \  \E[\y^*|\x = 10] = 0,\\ 
    \Pr[\x = 11] &= \frac{1- \eps}{2}, \ \E[\y^*|\x = 11] = \frac{1}{2} + \frac{\eps}{2(1 - \eps)}.
    \end{align*}
 }   
    
    The predictor $p_1$ is not perfectly calibrated, indeed  we have chosen $\delta$ such that the joint distribution of $(p_1(\x), \y^*)$ is exactly $J^*$: conditioned on either prediction value  in $\{1/2 \pm \delta\}$, the bit $\y^*$ is uniformly random.  In contrast, the predictor $p_2$ is easily seen to be calibrated.
    
    Observe that $p_1$ and $p_2$ agree on $00$ and $11$. They disagree by $\delta$ on $01$ and $10$, which each have $\eps$ probability under $\D^*$, so $d(p_1, p_2) = 2\eps\delta = \Theta(\eps^2)$. This establishes the difficulty of pinning down the true distance to calibration within a quadratic factor. 

\subsection{Relating Upper and Lower Distances to Calibration}
\label{sec:relate-appendix}
In this subsection, we prove \Cref{thm:int-upper} showing that the upper and lower distance to calibration can be at most quadratically far apart. This shows that the simple example in \Cref{app:example} is nearly tight.
We follow the proof strategy of \cite{BlasiokGHN23} using the notion of \emph{interval calibration error}.

\subsubsection{Interval Calibration Error}

\begin{definition}[Interval Calibration Error \cite{BlasiokGHN23}]
A interval partition $B$ is a partition of the interval $[0,1]$ into disjoint intervals $I_1, \ldots, I_k$. We let the width of the partition $\width(B)$ be the length of longest interval.
Given a predictor $p$, we define its  calibration error and interval calibration error for $B$ respectively as
\begin{align*}
\CE_B(p, \D^*) &= \sum_{j \in [k]}|\E[(\y^* - p(\x))\Ind(p(\x) \in I_j)]|\\
\intCE_B(p, \D^*) &= \CE_B(p, J^*) + \width(B).
\end{align*}
The \emph{interval calibration error}  minimizes over all interval partitions $B$:
\[ \intCE(p, \D^*) = \min_B \intCE_B(p, \D^*).\]
\end{definition}

The definition of $\intCE_B$ involves two terms that represent a tradeoff: the calibration error term, and the width term that penalizes partitions which use large width intervals. Intuitively, as the intervals grow larger it is easier to reduce calibration error, since we are allowed to cancel out the point-wise errors $\E[\y^*|p(\x)]  - p(\x)$ over larger intervals; but the width penalty also grows larger. At one extreme, we can think of the width $0$ case as corresponding to the $\ECE$. At the other extreme, by taking the single interval $[0,1]$, we pay $\E[\y^* - p(\x)]$ which is $0$ if the expectations of $\y^*$ and $p(\x)$ are equal; a very weak calibration guarantee. But now the width penalty is $1$. 

Formal justification for the definition comes from the following observation. The canonical predictor $q_B$ for an interval partition $B$ and a distribution $\D^*$ is the predictor where for all $x \in I_j$, the $q_B$ predicts  $v_j = \E[y^*|p(\x) \in I_j|$. It is easy to see that $q_B$ is perfectly calibrated for $\D^*$. 

\begin{lemma}
\label{lem:int-ce}
The canonical predictor $q_B$ for $B, \D^*$ satisfies $d(p, q_B) \leq \intCE_B(p, \D^*)$.
\end{lemma}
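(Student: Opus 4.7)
The plan is to show that the canonical predictor $q_B$ differs from $p$ by at most $\intCE_B(p,\D^*)$ in the $L^1$ sense by decomposing the error bucket-by-bucket, and within each bucket splitting the deviation $|p(\x) - v_j|$ into a width contribution and a calibration contribution via the triangle inequality.

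Concretely, for each interval $I_j$, let $\bar p_j := \E[p(\x) \mid p(\x) \in I_j]$. Since both $p(\x)$ and $\bar p_j$ lie in $I_j$ whenever $p(\x)\in I_j$, we have the pointwise bound $|p(\x)-\bar p_j|\le \width(I_j)\le \width(B)$ on this event. On the other hand, by the definition of the canonical predictor,
\[
(v_j - \bar p_j)\Pr[p(\x)\in I_j] = \E[(\y^*-p(\x))\Ind(p(\x)\in I_j)],
\]
so $|v_j-\bar p_j|\Pr[p(\x)\in I_j] = |\E[(\y^*-p(\x))\Ind(p(\x)\in I_j)]|$, which is exactly the $j$-th term of $\CE_B(p,\D^*)$.

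Applying the triangle inequality $|p(\x)-v_j|\le |p(\x)-\bar p_j|+|\bar p_j - v_j|$ on the event $\{p(\x)\in I_j\}$, taking expectations, and summing over $j$:
\[
d(p,q_B) = \sum_j \E\bigl[|p(\x)-v_j|\Ind(p(\x)\in I_j)\bigr] \le \sum_j \width(I_j)\Pr[p(\x)\in I_j] + \sum_j |v_j-\bar p_j|\Pr[p(\x)\in I_j].
\]
The first sum is at most $\width(B)\cdot\sum_j \Pr[p(\x)\in I_j] = \width(B)$, and the second equals $\CE_B(p,\D^*)$ by the identity above. Adding these gives $d(p,q_B)\le \width(B)+\CE_B(p,\D^*) = \intCE_B(p,\D^*)$.

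There is no real obstacle here; the only subtle point is that $v_j$ itself need not lie in $I_j$, which is why the intermediate pivot $\bar p_j$ (guaranteed to be in $I_j$) is needed so that the width bound can be invoked cleanly.
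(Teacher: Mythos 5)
Your proof is correct and takes essentially the same route as the paper: both introduce the conditional mean $\E[p(\x)\mid p(\x)\in I_j]$ as the intermediate pivot (the paper calls it $w_j$, you call it $\bar p_j$), use the triangle inequality to split $|p(\x)-v_j|$ into a width term and a $|v_j - \bar p_j|$ term, and identify $\sum_j \Pr[p(\x)\in I_j]\,|v_j-\bar p_j|$ with $\CE_B(p,\D^*)$. The only cosmetic difference is that you state the identity linking $|v_j-\bar p_j|$ to the bucket calibration error explicitly, whereas the paper records it as a rewriting of $\intCE_B$ before bounding $d(p,q_B)$.
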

\begin{proof}
    Let $w_j = \E[p(\x)|p(\x) \in I_j]$, and note that $w_j \in I_j$, a property that will be used shortly. We can write
    \begin{align}
        \label{eq:v-w}
    \intCE_B(p, J^*) = \width(B) + \sum_j \Pr[p(\x) \in I_j]|v_j - w_j|. 
    \end{align}
    
     We now bound $d(p, q_B)$ as
    \begin{align*} 
    d(p, q_B) &= \E_{\D^*}[|p(\x) - q_B(x)|] \\
    & = \sum_{j \in [k]}\Pr[p(\x) \in I_j]\E[|p(\x) - v_j||p(\x) \in I_j] \\
    & \leq \sum_{j \in [k]} \Pr[p(\x) \in I_j]\left( \E[|p(\x) - w_j||p(\x) \in I_j] +  |w_j - v_j| \right)\\
    & \leq (\sum_{j \in [k]} \Pr[p(\x) \in I_j])\width(B) + \sum_{j \in [k]} \Pr[p(\x) \in I_j]|v_j - w_j|\\
    &= \intCE_B(p, J^*) \ \ \ \ (\text{By equation}  \eqref{eq:v-w})
    \end{align*}
    where the penultimate line uses the fact that conditioned on $p(\x) \in I_j$, $|p(\x) - w_j| \leq \width(B)$ since both values lie in the interval $I_j$.
\end{proof}

This leads to the following upper bound: 
\begin{theorem}
\cite{BlasiokGHN23}
\label{cor:intCE}
    We have $\uCE(p, \D^*) \leq \intCE(p, \D^*)$.
\end{theorem}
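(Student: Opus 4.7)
The plan is to combine the two key ingredients already established in the excerpt: the characterization of $\uCE$ in \Cref{thm:char-upper} as a minimum over post-processings of $p$ that yield a perfectly calibrated predictor, and the bound $d(p, q_B) \le \intCE_B(p, \D^*)$ from \Cref{lem:int-ce}. The plan is to show that for every interval partition $B$, the canonical predictor $q_B$ can be written as $\kappa_B \circ p$ for some $\kappa_B \in K(J^*)$, and then chain the two bounds and minimize over $B$.

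First, I would fix an arbitrary interval partition $B = (I_1,\ldots,I_k)$ of $[0,1]$ and define $\kappa_B:[0,1]\to[0,1]$ by $\kappa_B(v) = v_j := \E[\y^* \mid p(\x)\in I_j]$ whenever $v \in I_j$. By construction, $q_B(\x) = \kappa_B(p(\x))$, so $q_B$ is indeed a post-processing of $p$. Next, I would verify that $\kappa_B \in K(J^*)$, i.e., that the predictor $\kappa_B \circ p$ is perfectly calibrated under $\D^*$. This follows essentially from the definition of $v_j$: on each level set $\{p(\x) \in I_j\}$, the post-processed predictor takes the constant value $v_j$, and by the tower property $\E[\y^* \mid \kappa_B(p(\x)) = v_j]$ equals $v_j$ (modulo a small technical point if two different intervals happen to yield the same value $v_j$, in which case one merges them and reapplies the tower property).

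Having established $\kappa_B \in K(J^*)$, I can invoke \Cref{thm:char-upper} to conclude
\[
\uCE(J^*) \;=\; \min_{\kappa\in K(J^*)} d(p,\kappa\circ p) \;\le\; d(p, \kappa_B\circ p) \;=\; d(p, q_B).
\]
Combining with \Cref{lem:int-ce}, which gives $d(p, q_B) \le \intCE_B(p, \D^*)$, yields $\uCE(J^*) \le \intCE_B(p, \D^*)$ for every interval partition $B$. Taking the infimum over $B$ on the right-hand side gives $\uCE(J^*) \le \intCE(p, \D^*)$, as desired.

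There is no real obstacle here: the proof is essentially a one-line chain once the right characterizations are in place. The only thing to be careful about is the bookkeeping around the fact that $\uCE$ was defined as a function of $J^*$ alone while the statement writes $\uCE(p, \D^*)$; this is just the mild abuse of notation already present in the text, and causes no issue since $J^*$ is determined by $(p,\D^*)$.
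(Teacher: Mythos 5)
Your proof is correct and takes essentially the same route as the paper: observe that $q_B = \kappa_B \circ p$ is a perfectly calibrated post-processing of $p$, invoke \Cref{thm:char-upper} to get $\uCE(J^*) \le d(p, q_B)$, apply \Cref{lem:int-ce}, and minimize over $B$. The small technical point you flag (distinct intervals mapping to the same $v_j$) is indeed harmless—the conditional expectation over a union of level sets with the same target value is still that value—and the paper glosses over it just as you do.
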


To prove Theorem \ref{cor:intCE} we observe that the canonical predictor $q_B$ can be viewed as a post-processing of the predictor $p$,  since we can write $q_B(x) = \kappa(p(x))$ where $\kappa(t) = v_j$ for $t \in I_j$.  Thus by Lemma~\ref{lem:int-ce}, 
\[
\uCE(p, \D^*) \le d(p, q_B) \leq \intCE_B(p, \D^*).
\]
Minimizing over all $B$ completes the proof. 

The reader might wonder, why define yet another calibration measure? The answer  is two-fold:
\begin{itemize}
    \item Interval calibration error gives a simple yet powerful upper bound on the upper distance to calibration. In the next subsection, this allows us to relate the upper and lower distance to calibration, showing that they are never more than quadratically far apart. This is formally proved in Theorem \ref{thm:int-upper}, showing the gap example in Corollary \ref{cor:indist} is the worst possible (up to constants).
    \item It presents a rigorous alternative to heuristic measures like bucketed $\ECE$: regularize the calibration error by adding the max bucket width. This allows for meaningful comparison of calibration scores obtained using different number or other choice of buckets, rather than leaving the number of buckets as a hyperparameter. 
\end{itemize}

\subsubsection{Proof of Theorem~\ref{thm:int-upper}}

Let us pick $\X, \D^*, p$ to be the space, distribution and predictor respectively that achieve the lower distance to calibration for $J^*$. So there exists a perfectly calibrated predictor $q:\X \to [0,1]$  such that $d(p, q) = \lCE(p, \D^*) = \delta$. We wish to infer the existence of a bucketing $B$ so that $\intCE_B(p, \D^*)$ is small. By Theorem \ref{cor:intCE}, this will imply that the upper distance is bounded. Corollary \ref{cor:indist} tells us that we cannot hope for an upper bound better than $\sqrt{\delta}/2$. It turns out that this is not far from the best possible:

\begin{lemma}
\label{lm:exist-bucket}
    There exists a bucketing $B$  such that $\intCE_B(p, \D^*) \leq 4\sqrt{\delta}$. 
\end{lemma}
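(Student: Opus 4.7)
The plan is to choose a uniform bucketing of $[0,1]$ into intervals of width $w = \sqrt{\delta}$, applied with a cleverly chosen shift, and then bound $\CE_B(p,\D^*)$ by the $\ell_1$ distance $d(p,q) = \delta$ plus an error term coming from pairs $(p(\x),q(\x))$ that happen to fall in different buckets. Combined with the width $w$, we will get $\intCE_B(p,\D^*) \le w + \delta + 2\delta/w = \sqrt\delta + \delta + 2\sqrt\delta \le 4\sqrt\delta$.

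First, I would replace $p$ with $q$ inside the error: since $q$ is perfectly calibrated, $\E[(\y^*-q(\x))g(q(\x))] = 0$ for every function $g$. Writing $E_j = \{p(\x)\in I_j\}$ and $E'_j = \{q(\x)\in I_j\}$, the triangle inequality gives
\begin{align*}
\CE_B(p,\D^*) &= \sum_j |\E[(\y^* - p(\x))\Ind(E_j)]| \\
&\le \sum_j |\E[(\y^* - q(\x))\Ind(E_j)]| + \sum_j \E[|q(\x)-p(\x)|\Ind(E_j)] \\
&= \sum_j |\E[(\y^* - q(\x))(\Ind(E_j)-\Ind(E'_j))]| + \delta,
\end{align*}
where the second line bounds the first sum by introducing $\Ind(E'_j)$ (which contributes zero by calibration of $q$) and the last sum equals $d(p,q)=\delta$. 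The remaining sum is at most $\sum_j \Pr[E_j \triangle E'_j]$, which counts (weighted) how often $p(\x)$ and $q(\x)$ land in different buckets; each $x$ contributes to at most two such symmetric differences, so this is bounded by $2\Pr[p(\x), q(\x) \text{ lie in different buckets of } B]$.

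The main obstacle is controlling this last probability, since $p(\x)$ and $q(\x)$ can straddle a bucket boundary even when they are very close. The standard fix is a random-shift argument: consider the family of bucketings $B_s$ obtained by translating the grid of width-$w$ intervals by a uniformly random offset $s\in [0,w]$. For any fixed pair of values $u,v\in [0,1]$, the probability over $s$ that $u$ and $v$ fall in different buckets of $B_s$ is at most $\min(1,|u-v|/w)$. Applying this pointwise and averaging over $\x$:
\[
\E_s[\Pr[p(\x), q(\x) \text{ in different buckets of } B_s]] \le \E[|p(\x)-q(\x)|]/w = \delta/w.
\]
Hence some shift $s^*$ achieves $\Pr[p(\x), q(\x) \text{ in different buckets of } B_{s^*}] \le \delta/w$.

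Putting everything together for $B = B_{s^*}$:
\[
\intCE_B(p,\D^*) = \CE_B(p,\D^*) + \width(B) \le \left(\frac{2\delta}{w} + \delta\right) + w.
\]
Plugging in $w = \sqrt{\delta}$ and using $\delta \le \sqrt\delta$ (since $\delta \le 1$) yields $\intCE_B(p,\D^*) \le 2\sqrt\delta + \sqrt\delta + \sqrt\delta = 4\sqrt\delta$, as desired.
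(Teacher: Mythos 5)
Your proof is correct and follows essentially the same route as the paper: the same triangle-inequality decomposition of $\CE_B(p,\D^*)$ into a $d(p,q)$ term plus a bucket-mismatch term, the same random-shift argument showing that the mismatch probability is at most $\E[|p(\x)-q(\x)|]/w$, and the same width-vs-error tradeoff optimized near $w\approx\sqrt{\delta}$. The only cosmetic difference is your choice $w=\sqrt{\delta}$ versus the paper's $\beta=\sqrt{2\delta}$; both yield the stated $4\sqrt{\delta}$ bound.
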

Combining this lemma with \Cref{cor:intCE}, we have completed the proof:
\[
\uCE(p, \D^*) \leq \intCE(p, \D^*) \le \intCE_B(p, \D^*) \leq 4\sqrt{\delta} = 4\sqrt{\lCE(p,\D^*)}.
\]
\begin{proof}[Proof of \Cref{lm:exist-bucket}]
Let $\beta$ be a width parameter to be chosen later. We consider the bucketing $B$ where the first interval is $[0,b]$ for $b$ picked randomly from the interval $[0, \beta]$. Every subsequent interval has width $\beta$ (except possibly the last, which might be smaller). Denote the intervals by $I_1, \ldots, I_k$. 

For the predictor $q$, the calibration error term for $B$ is $0$ since
\[ \CE_B(q) = \sum_{j \in k}|\E[\Ind(q(\x) \in I_j)(\y^* - q(\x))]| \leq \int_{v \in [0,1]}\Pr[q(\x) = v]|\E[(\y^* - q(\x)|q(\x) = v]| = 0.\]

So we will try the bound the calibration term for $p$ by comparing it to $q$ and arguing that if they are close by, this error is small.
\begin{align}
  \CE_B(p, \D^*) &=   \sum_{j \in k}|\E[(\y^* - p(\x))\ind(p(\x) \in I_j)]|\notag\\ 
  & \leq \sum_{j \in k}|\E[(\y^* - q(\x))\ind(p(\x) \in I_j)]| + \sum_{j \in k} |\E[(q(\x) - p(\x))\ind(p(\x) \in I_j)]|\label{eq:tech0}
\end{align}
We bound each of these terms separately. To bound the second term,
\begin{align}
\sum_{j \in k} |\E[(q(\x) - p(\x))\ind(p(\x) \in I_j)]| = \E[|q(x) - p(x)|] \leq \delta \label{eq:tech1}
\end{align}
For the first term, we have
\begin{align*}
    \sum_{j \in k}|\E[(\y^* - q(\x))\ind(p(\x) \in I_j)]|  \leq &  \sum_{j \in k}|\E[(\y^* - q(\x))\ind(q(\x) \in I_j)]|  + \\
    & |\E[(\y^* - q(\x))(\ind(p(\x) \in I_j) - \ind(q(\x) \in I_j)]\\
    \leq & \sum_j|\ind(p(\x) \in I_j) - \ind(q(\x) \in I_j)|
\end{align*}
where we use $\CE_B(q,\D^*) =0$ and $|\y^* -q(\x)| \leq 1$. The RHS is $0$ if $p(\x)$ and $q(\x)$ land in the same bucket, else it is $2$. $p(\x)$ and $q(\x)$ land in different buckets if there is a bucket boundary between them, which happens with probability bounded by $|p(\x) - q(\x)|/\beta$ over the random choice of $b$. Hence we can bound 
\begin{align}
\label{eq:tech2}
    \sum_{j \in k}|\E[(\y^* - q(\x))\ind(p(\x) \in I_j)]| \leq  \frac{2\E[|p(\x) -q(\x)|]}{\beta} = \frac{2\delta}{\beta}.
\end{align}
Plugging Equations \eqref{eq:tech1} and \eqref{eq:tech2} back into Equation \eqref{eq:tech0} and choosing $\beta = \sqrt{2\delta}$,
\begin{align*}
\CE_B(p, \D^*) & \leq  \delta + 2\delta/\beta.\\
\intCE_B(p, \D^*) & \leq \CE_B(p, \D^*)  + \width(B)
\leq \beta + \delta + 2\sqrt{\delta} \leq 4\sqrt{\delta}.    \qedhere
\end{align*}
\end{proof}

\section*{Conclusion}
The classic notion of calibration needs to be rethought in order to satisfy requirements like robustness and computational efficiency, motivated by applications to machine learning and decision making. This leads to a rich set of new questions, in terms of what are desirable properties for approximate calibration notions to have and new algorithmic challenges that arise from trying to achieve these properties. This is a broad and active area of research that spans machine learning, decision making and computational complexity.  There are several questions that still remain, such as efficient and meaningful notions of calibration for the multiclass setting \cite{GopalanHR24} and the generative setting \cite{kalai-vempala}.
We hope to have given the reader a feel for this in the survey, by highlighting the motivating questions, the definitional challenges and the algorithmic issues.  

\bibliographystyle{alphaurl}
\bibliography{refs}

\appendix

\end{document}